\documentclass[11pt]{article}





\usepackage{fullpage}
\usepackage{natbib}
\renewcommand\cite{\citep}

\usepackage[utf8]{inputenc} 
\usepackage{hyperref}       
\usepackage{url}            
\usepackage{booktabs}       
\usepackage{amsfonts}       
\usepackage{nicefrac}       
\usepackage{amsmath, amsthm}
\usepackage{algorithm, algorithmicx, algpseudocode}
\usepackage{subfigure}
\usepackage{amssymb}

\usepackage{color}
\usepackage[dvipsnames]{xcolor}

\newcommand{\cM}{\mathcal{M}}
\newcommand{\cS}{\mathcal{S}}
\newcommand{\cA}{\mathcal{A}}
\newcommand{\cF}{\mathcal{F}}
\newcommand{\ts}{\widetilde{s}}
\newcommand{\RR}{\mathbb{R}}
\newcommand{\PP}{\mathbb{P}}
\newcommand{\EE}{\mathbb{E}}

\newcommand{\cH}{\mathcal{H}}
\newcommand{\cB}{\mathcal{B}}

\newcommand{\wt}{\widetilde}
\newcommand{\wb}{\overline}

\newcommand{\cK}{\mathcal{K}}
\newcommand{\var}{\mathrm{Var}}
\newcommand{\tr}{\mathrm{tr}}
\newcommand{\poly}{\mathrm{poly}}
\newcommand{\tw}{\widetilde{w}}
\newcommand{\reg}{\mathrm{Regret}}

\def\cX{\mathcal{X}}

\ifx\theorem\undefined
\newtheorem{theorem}{Theorem}
\fi
\ifx\example\undefined

\fi
\ifx\property\undefined

\fi
\ifx\lemma\undefined
\newtheorem{lemma}[theorem]{Lemma}
\fi
\ifx\proposition\undefined

\fi
\ifx\remark\undefined

\fi
\ifx\corollary\undefined

\fi
\ifx\definition\undefined
\newtheorem{definition}{Definition}
\fi
\ifx\conjecture\undefined

\fi
\ifx\fact\undefined

\fi
\ifx\claim\undefined

\fi
\ifx\assump\undefined
\newtheorem{assumption}{Assumption}
\fi
\ifx\cond\undefined

\fi
\ifx\problem\undefined

\fi

\newenvironment{customassumption}[1]
{\innercustomassumption}
{\endinnercustomassumption}

\title{Reinforcement Learning in Feature Space: Matrix Bandit, Kernels, and Regret Bound}

\author{%
	Lin F. Yang\\
	Princeton University\\
	\texttt{lin.yang@princeton.edu}\\
	\and
	Mengdi Wang\\
	Princeton University\\
	\texttt{mengdiw@princeton.edu}
}

\begin{document}
	\maketitle
	
\begin{abstract}
Exploration in reinforcement learning (RL) suffers from the curse of dimensionality when the state-action space is large. 
A common practice is to parameterize the high-dimensional value and policy functions using given features.
However existing methods either have no theoretical guarantee or suffer a regret that is exponential in the planning horizon $H$.
In this paper, 
we propose an online RL algorithm, namely the MatrixRL, that leverages ideas from linear bandit  to learn a low-dimensional representation of the probability  transition model while carefully balancing the exploitation-exploration tradeoff.
We show that MatrixRL achieves a regret bound ${O}\big(H^2d\log T\sqrt{T}\big)$ where $d$ is the number of features. 
MatrixRL has an equivalent kernelized version, which is able to work with an arbitrary kernel Hilbert space without using explicit features.
In this case, the kernelized MatrixRL satisfies a regret bound ${O}\big(H^2\wt{d}\log T\sqrt{T}\big)$, where $\wt{d}$ is the effective dimension of the kernel space.
To our best knowledge,  for RL using features or kernels, our results are the first regret bounds that are near-optimal in time $T$ and dimension $d$  (or $\wt{d}$) and polynomial in the planning horizon $H$.
\end{abstract}

\section{Introduction}
Reinforcement learning (RL) is about learning to make sequential decisions in an unknown environment through trial and error. It finds wide applications in robotics \cite{kober2013reinforcement}, autonomous driving \cite{shalev2016safe}, game AI \cite{silver2017mastering} and beyond. 
We consider a basic RL model - the Markov decision process (MDP). In the MDP, an agent at a state $s\in\cS$ is able to play an action $a\in \cA$, where $\cS$ and $\cA$ are the state and action spaces. Then 
the system transitions to another state $s'\in \cS$ according to an unknown probability $P(s'~|~s,a)$, while returning an immediate reward $r(s,a)\in[0,1]$. 
The goal of the agent is to obtain the maximal possible return after playing for a period of time - even though she has no knowledge about the transition probabilities at the beginning. 

The performance of a learning algorithm is measured by ``regret". Regret is the difference between the cumulative reward obtained using the best possible policy and the cumulative reward obtained by the learning algorithm. 
In the tabular setting where $\cS$ and $\cA$ are finite sets, there exist algorithms that achieve asymptotic regret $ O(\sqrt{|\cS||\cA|T})$ (e.g. \cite{jaksch2010near,osband2016lower, osband2017deep, agrawal2017optimistic, azar2017minimax, dann2018policy,jin2018q}), where $T$ is the number of time steps. 
However, the aforementioned regret bound depends polynomially on ${|\cS|}$ and $|\cA|$, sizes of the state and action space, which can be very large or even infinite.
For instance, the game of Go has $3^{361}$ unique states, and a robotic arm has infinitely many continuous-valued states.
In the most general sense,  the regret $O(\sqrt{|\cS||\cA|T})$ is nonimprovable in the worst case \cite{jaksch2010near}. 
This issue is more generally known as the ``curse of dimensionality'' of control and dynamic programming \cite{bellman1966dynamic}.

To tackle the dimensionality, a common practice is to use features to parameterize high-dimensional value and policy functions in compact presentations, with the hope that the features can capture leading structures of the MDP. 
In fact, there are phenomenal empirical successes of reinforcement learning using explicit features and/or neural networks as implicit features (see e.g., \cite{mnih2015human}).
However, there is a lack of theoretical understanding about using features for exploration in RL and its learning complexity.
In this paper, we are interested in the following theoretical question:
\[
\text{\it {How to use features for provably efficient exploration in reinforcement learning?}}
\]
Furthermore, we consider online RL in a reproducing kernel space. Kernel methods are well known to be powerful to capture nonlinearity and high dimensionality in many machine learning tasks \cite{shawe2004kernel}. We are interested in using \emph{kernel methods} to capture nonlinearity in the state-transition dynamics of MDP.  A kernel space may consist of infinitely many implicit feature functions. We study the following questions:
How to use kernels in online reinforcement learning? Can one achieve low regret even though the kernel space is infinite-dimensional?
The goal of this paper is to answer the aforementioned questions affirmatively.  In particular, we would like to design algorithms that take advantages of given features and kernels to achieve efficient exploration. 



\subsection{Our Approach and Main Results}

Consider episodic reinforcement learning in finite-horizon MDP. The agent learns through episodes, and each episode consists of $H$ time steps. 
Here $H$ is also called the \emph{planning horizon}. 
Let $\phi(\cdot)\RR^d, \psi(\cdot)\in \RR^{d'}$ be given feature functions.
We focus on the case that the probability transition model $P(\cdot~|~\cdot)$ can be fully embedded in the feature space (Assumption 1), i.e., there exists some core matrix $M^*$ such that
 \[
 P(\cdot~|~\cdot) = \phi(\cdot)^\top M^* \psi(\cdot).
 \]
 In the kernel setting, this condition is equivalent to that the transition probability model $P$ belongs to the product space of  the reproducing kernel spaces.
 This condition is essentially equivalent to using the features $\phi$ to represent value functions \cite{parr2008analysis}. When the probability transition model $P$ cannot be fully embedded using $\phi$, then value function approximation using $\phi$ may lead to arbitrarily large Bellman error \cite{yang2019sample}. 
 
We propose an algorithm, which is referred to as MatrixRL, that actively explores the state-action space by estimating the core matrix via ridge regression. The algorithm balances the exploitation-exploration tradeoff by constructing a confidence ball of core matrix for optimistic dynamic programming. It can be thought of as a ``matrix bandit" algorithm which generalizes the idea of linear bandit (e.g. \cite{dani2008stochastic,li2010contextual, Chu2011}).
It is proved to achieve the regret bound either 
\[
\wt{O}(H^2d^{3/2}\sqrt{T}) \quad\text{or}\quad  \wt{O}(H^2d\sqrt{T})\footnote{$\wt{O}(\cdot)$ hides $\poly\log$ factors of the input.},
\] depending on regularity properties of the features. MatrixRL can be implemented efficiently in space $O(d^2)$. Each step can be carried out in closed form.
Next we extend the MatrixRL to work with the kernel spaces with $k_{\phi}(\cdot, \cdot) =\langle \phi(\cdot), \phi(\cdot)\rangle$ and $k_{\psi}(\cdot, \cdot) =\langle \psi(\cdot), \psi(\cdot)\rangle$, and show that it admits a kernelized version. 
The kernelized MatrixRL achieves a regret bound of 
\[
\wt{O}(H^2\wt{d}\sqrt{T})
\]
where $\wt{d}$ is the \emph{effective dimension} of kernel space, even if there may be infinitely many features.
The regret bounds using features or kernels do not depend on sizes of the state and action spaces, making efficient exploration possible in high dimensions.

Note that for linear bandit, the regret lower bound is known to be $\wt{\Omega}(d\sqrt{T})$  \cite{dani2008stochastic}. Since linear bandit is a special case of RL, 
our regret bounds match the lower bound up to polylog factors in $d$ and $T$.  
To our best knowledge, for reinforcement learning using features/kernels, our result gives the first regret bound that is \emph{simultaneously near-optimal in time $T$,  polynomial in the planning horizon $H$, and near-optimal in the feature dimension $d$.}


\subsection{Related Literature}

In the tabular case where there are finitely many states and actions without any structural knowledge, complexity and regret for RL has been extensively studied.
For $H$-horizon episodic RL, efficient methods typically achieve regret that scale asymptotically as $O(\sqrt{HSAT})$ (see for examples \cite{jaksch2010near,osband2016lower, osband2017deep, agrawal2017optimistic,azar2017minimax,dann2018policy,jin2018q}). In particular, \cite{jaksch2010near} provided a regret lower bound  $\Omega(\sqrt{HSAT})$ for $H$-horizon MDP. 
 There is also a line of works studying the sample complexity of obtaining a value or policy that is at most $\epsilon$-suboptimal \cite{kakade2003sample, strehl2006pac,strehl2009reinforcement,szita2010model, lattimore2014near,azar2013minimax, dann2015sample,sidford2018near}. The optimal sample complexity for finding an $\epsilon$-optimal policy is $O\big(|\cS||\cA|(1-\gamma)^{-2}\epsilon^{-2}\big)$ \cite{sidford2018near} for a discounted MDP with discount factor $\gamma$. The optimal lower bound has been proven in \cite{azar2013minimax}.

There is also a line of works on solving MDPs with a function approximation. For instance \cite{baird1995residual,  tsitsiklis1997analysis, parr2008analysis, mnih2013playing, mnih2015human, silver2017mastering, yang2019sample}.
There are also phenomenal empirical successes in deep reinforcement learning as well (e.g., \cite{silver2017mastering}). 
However there are not many works on the regret analysis of RL with function approximators. 
Very recently, \cite{azizzadenesheli2018efficient} studied the regret bound for linear function approximator. 
However their bound has factor that can be exponential in $H$.
\cite{pmlr-v89-chowdhury19a} considers the regret bound for kernelized MDP. However, they need a Gaussian process prior and assumes that the transition is deterministic with some controllable amount of noise -- a very restrictive setting.
Another work \cite{modi2019contextual} also considers the linear setting for RL. However, the regret bound  is linearly depending on the number of states. 
To the best of our knowledge, we are not aware of other works that achieve regret bound for RL with function approximators that is simultaneously near optimal in $T$, polynomial in $H$, and has no dependence with the state-action space size.

Our results are also related to the literature of linear bandits. 
Bandit problems can be viewed as a special case as Markov decision problems.
There is a line of works on linear bandit problems and their regret analysis \cite{dani2008stochastic, rusmevichientong2010linearly, li2010contextual, abbasi2011improved, Chu2011}. For a more detailed survey, please refer to \cite{bubeck2012regret}. 
Part of our results are inspired by the kernelization for the linear bandit problems, e.g. \cite{Valko, chowdhury2017kernelized}, who studied the regret bound when the features of each arm lies in some reproducing kernel Hilbert space.



\section{Problem Formulation}

In a \emph{episodic Markov decision process} (MDP for short),
there is a set of \emph{states} $\cS$ and a set of \emph{actions} $\cA$, which are not necessarily finite. 
At any state $s\in \cS$, an agent is allowed to play an action $a\in \cA$.
She receives an immediate reward $r(s, a)\in [0, 1]$ after playing $a$ at $s$,
 the process will transition to the next state $s'\in \cS$ with probability $P(s'~|~s,a)$, where $P$ is the collection of \emph{transition distributions}.
After $H$ time steps, the system restarts at a prespecified state $s_0$.
The full instance of an MDP can be described by the tuple $M=(\cS,\cA,P,r,s_0, H).$
The agent would like to find a \emph{policy} $\pi:\cS\times [H]\rightarrow \cA$ that maximizes the long-term expected reward starting from every state $s$ and every stage $h\in[H]$, i.e.,
\[
V_h^{\pi}(s) := \EE\bigg[\sum_{h'= h}^{H}r(s^t, \pi_h(s^t))|s^0 = s\bigg].
\]
We call $V^{\pi}:{\cS\times[H]}\rightarrow \RR$ the \emph{value function} of policy $\pi$.
A policy $\pi^*$ is said to be \emph{optimal} if it attains the maximal possible value at every state-stage pair $(s,h)$.
We denote $V^*$ as the optimal value function. We also denote the optimal action-value function (or $Q$-function) as
\[
\forall h\in [H-1]:\quad 
Q_h^*(s,a) = r(s,a) + P(\cdot|s,a)^\top V_{h+1}^*,
\]
and $Q_{H}^*(s,a) = r(s,a)$.

In the online RL setting, the learning algorithm interacts with the environment episodically.
Each episode starts from state $s_0$ takes $H$ steps to finish. 
We let $n$ denote the current number of episodes and denote $t = (n-1)H + h$ the current time step. 
We equalize $t$ and $(n,h)$ and may switch between the two nations.
We use the following definition of regret.
\begin{definition}
	Suppose we run algorithm $\cK$ in the online environment of an MDP $\cM=(\cS,\cA, P,r, s_0, H)$ for $T=NH$ steps. 
	We define the regret for algorithm $\cK$ as
	\begin{align}
	\label{eqn:regret-def}
	\reg_{\cK}(T) = \EE_{\cK}\bigg[\sum_{n=1}^N\bigg( V^*(s_0) - \sum_{h=1}^Hr\Big(s_{t}, a_{t}\Big)\bigg)\bigg],
	\end{align}
	where $\EE_{\cK}$ is taken over the random path of states under the control of algorithm $\cK$.
\end{definition}

Throughout this paper, we focus on RL problems where the probability transition kernel $P(\cdot~|~\cdot)$ can be fully embedded in a given feature space. 

\begin{assumption}[Feature Embedding of Transition Model]
	\label{assump:linear transition core}
	For each $(s,a)\in \cS\times\cA, \ts\in \cS$, feature vectors $\phi(s,a)\in\RR^d, \psi(\ts)\in \RR^{d'}$ are given as a priori. 
	There exists an unknown matrix $M^*\in \RR^{d\times d'}$ such that 
	\[
	P(\ts~|~s,a)= \phi(s,a)^\top M^*\psi(\ts).
	\]
	Here, we call the matrix $M^*$ as a \emph{transition core}.
\end{assumption}

Note that when $\phi,\psi$ are features associated with two reproducing kernel spaces $\cH_1$ and $\cH_2$, this assumption requires that $P$ belong to their product kernel space $\cH_1\times \cH_2$. 

For simplicity of representation, we assume throughout that the reward function $r$ is known. This is in fact without loss of generality because learning about the environment $P$ is much harder than learning about $r$. In the case if $r$ is unknown but satisfies $\EE[r(s,a)~|~s,a] = \phi(s,a)^\top \theta^*$ for some unknown $\theta^*\in \RR^d$, we can extend our algorithm by adding a step of optimistic reward estimation like in LinUCB \cite{dani2008stochastic, Chu2011}. This would generate an extra $\wt{O}(d\sqrt{T})$ regret, which is a low order term compared to our current regret bounds.

\section{RL Exploration in Feature Space}
\label{sec:rl-feature}

In this section, we 
study the near optimal way to balance exploration and exploitation in RL using a given set of features. We aim to develop an online RL algorithm with regret that depends only on the feature size $d$ but not on the size of the state-action space. Our algorithm is inspired by the LinUCB algorithm \cite{Chu2011} and its variants \cite{dani2008stochastic} and can be viewed as a ``matrix bandit" method.



\subsection{The MatrixRL Algorithm}
The high level idea of the algorithm is to approximate the unknown transition core $M^*$ using data that has been collected so far.
Suppose at the time step $t=(n,h)$ (i.e. episode $n\le N$ and stage $h\le H$), we obtain the following state-action-state transition triplet:
$
(s_{t}, a_{t}, \ts_{t}),
$
where $\ts_t := s_{t+1}$.
For simplicity, we denote the associated features by
\[
\phi_{t}:= \phi(s_{t}, a_{t}) \in\mathbb{R}^{d}
\quad\text{and}\quad
\psi_{t}:= \psi(\ts_{t})  \in\mathbb{R}^{d'}.
\]

\paragraph{Estimating the core matrix.}
Let $K_{\psi} := \sum_{\ts\in \cS}\psi(\ts)\psi(\ts)^\top$. 
We construct our estimator of $M^*$ as:
\begin{align}
\label{eqn:m-estimator}
M_{n} = [A_{n}]^{-1} \sum_{n'<n, h\le H} \phi_{n',h}\psi^{\top}_{n',h} K_{\psi}^{-1},
\end{align}
where
\[
A_{n} = I + \sum_{n'<n, h\le H} \phi_{n', h}\phi_{n', h}^\top.
\]
Let us explain the intuition of $M_{n}$.
Note that 
\begin{align*}
\EE\Big[\phi_{n, h}\psi_{n,h}^{\top}K_{\psi}^{-1}~|~s_{n,h}, a_{n, h}\Big] &= \sum_{\ts}  \phi_{n, h}P(\ts~|~s_{n,h}, a_{n, h})\psi(\ts)^\top K_{\psi}^{-1}
\\
&= 
\sum_{\ts}\phi_{n, h}
\phi_{n, h}^\top M^*\psi(\ts)\psi(\ts)^\top K_{\psi}^{-1}\\
&=\phi_{n, h}\phi_{n, h}^\top M^*.
\end{align*}
Therefore $M_{n}$ is the solution to the following ridge regression problem:
\begin{align}
M_{n} = \arg\min_{M} \sum_{n'<n, h\le H}\Big\| \psi^{\top}_{n',h} K_{\psi}^{-1} - \phi_{n', h}^\top M\Big\|_2^2 + \|M\|_F^2.
\end{align}

\paragraph{Upper confidence RL using a matrix ball.}
In online RL, a critical step is to estimate future value of the current state and action use dynamic programming.
To better balance exploitation and exploration, we use a matrix ball to construct optimistic value function estimator. At episode $n$:
\begin{align}
\label{eqn:compute q}
\forall (s,a) \in \cS\times \cA&:\quad 
Q_{n,H+1}(s,a) = 0 \quad\text{and}\quad \nonumber\\
\forall h\in [H]&: \quad
Q_{n,h}(s,a) = r(s,a) + \max_{M\in B_{n}} \phi(s,a)^\top M \Psi^\top V_{n,h+1}
\end{align}
where \[
V_{n,h}(s) = \Pi_{[0,H]}\big[\max_aQ_{n,h}(s,a)\big]\quad\quad \forall s,a,n,h.
\] 
Here the matrix ball $B_n=B_n^{(1)}$ is constructed as
\begin{align}
\label{eqn:confidence-ball}
B^{(1)}_{n}:=\Big\{M\in \RR^{d\times d'}:\quad \|(A_{n})^{1/2}(M-M_{n})\|_{2,1}\le \sqrt{d\beta_n}\Big\}
\end{align}
where $\beta_n$ is a parameter to be determined later, and $\|Y\|_{2,1} = \sum_{i}\sqrt{\sum_{j} Y{(i,j)}^2}$. 
At time $(n,h)$, suppose the current state is $s_{n,h}$, we play the optimistic action 
$
a_{n,h} = \arg\max_{a} Q_{n,h}(s_{n,h}, a).
$
The full algorithm is given in Algorithm~\ref{alg:core-rl}.
\begin{algorithm*}[htb!]
	\caption{Upper Confidence Matrix Reinforcement Learning (UC-MatrixRL) \label{alg:core-rl}}\small
	\begin{algorithmic}[1]
		\State 
		\textbf{Input:} An episodic MDP environment $M=(\cS,\cA, P, r, s_0, H)$;\\
		\qquad\quad Features $\phi:\cS\times \cA\rightarrow \RR^{d}$ and $\psi:\cS\rightarrow \RR^{d'}$;\\
		\qquad\quad Trotal number of episodes $N$;
		\State \textbf{Initialize:} $A_{1}\gets I\in \RR^{d\times d}$, $M_{1}\gets 0\in \RR^{d\times d'}$;
		\For{episode $n=1, 2, \ldots, N$}
		\State Let $\{Q_{n,h}\}$ be given  in \eqref{eqn:compute q} using $M_n,\beta_n$;
		\For{stage $h=1, 2, \ldots, H$}
		\State Let the current state be $s_{n,h}$;
		\State  Play action $a_{n,h} = \arg\max_{a\in \cA} Q_{n,h}(s_{n,h}, a)$
		\State Record the next state $s_{n, h+1}$;
		\EndFor
		\State $A_{n+1}\gets A_{n} + \sum_{h\le H}\phi_{n,h}\phi_{n,h}^\top$;
		\State Compute $M_{n+1}$ using \eqref{eqn:m-estimator};
		\EndFor
	\end{algorithmic}
\end{algorithm*}

\subsection{Regret Bounds for MatrixRL}
Let $\Psi = [\psi(s_1),\psi(s_2), \ldots, \psi(s_{|\cS|})]^\top\in \RR^{\cS\times d'}$ be the matrix of all $\psi$ features.
We first introduce some regularity conditions of the features space. 
\begin{assumption}[Feature Regularity]
	\label{assump:linrl-regularity}
	Let $C_{M}, C_{\phi}, C_{\psi}$ and $C_{\psi}'$ be positive parameters. 
	\begin{enumerate}
		\item $\|M^*\|_F^2\le C_M \cdot d$;
		\item $\forall~(s,a)\in \cS\times\cA:\quad \|\phi(s,a)\|_{2}^2\le C_{\phi} d$;
		\item $ \forall v\in \RR^{\cS}:\quad \|\Psi^\top v\|_{\infty}\le C_{\psi} \|v\|_{\infty}, \quad \text{and}\quad \|\Psi K_{\psi}^{-1}\|_{2,\infty}\le C_{\psi}'$.\footnote{Here $\|Y\|_{2,\infty}:=\max_{i}\sqrt{\sum_{j}Y^2(i,j)}$ is the operator $2\rightarrow\infty$ norm.}
	\end{enumerate}
\end{assumption}

With these conditions we are ready to provide the regret bound.
\begin{theorem}
	\label{thm:regret-bound}
	Suppose Assumption~\ref{assump:linear transition core} and Assumption~\ref{assump:linrl-regularity} hold.
	 Then after $T=NH$ steps, Algorithm~\ref{alg:core-rl} achieves regret bound:
	\[
	\reg(T)\le O\Big[\sqrt{C_{\psi}(C_M + {C_{\psi}'}^2)}\cdot \ln(C_{\phi}T)\Big]\cdot H^2\cdot \sqrt{d^3 T},
	\]
	if we let
	$
	\beta_k = c\cdot (C_{M} + {C_{\psi}'}^2)\cdot\ln(NHC_{\phi})\cdot d,
	$
	for some  absolute constant $c>0$.
\end{theorem}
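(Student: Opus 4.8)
The proof follows the optimism-in-the-face-of-uncertainty template, specialized to the matrix-bandit estimator. I organize it into four stages: (i) a high-probability confidence guarantee $M^*\in B_n$, (ii) optimism of the value iterates, (iii) a Bellman-error telescoping of the per-episode regret, and (iv) summing the exploration bonuses via an elliptical-potential argument. The first stage is the delicate one, as it pins down the admissible radius $\sqrt{d\beta_n}$. Writing the one-step noise as $\eta_{n',h}:=K_\psi^{-1}\psi_{n',h}-M^{*\top}\phi_{n',h}\in\RR^{d'}$, the computation preceding \eqref{eqn:m-estimator} shows $\EE[\eta_{n',h}\mid s_{n',h},a_{n',h}]=0$, so $\{\eta_{n',h}\}$ is a martingale-difference sequence whose magnitude is controlled through $\|\Psi K_{\psi}^{-1}\|_{2,\infty}\le C_\psi'$ in Assumption~\ref{assump:linrl-regularity}(3). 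Substituting $K_\psi^{-1}\psi_{n',h}=M^{*\top}\phi_{n',h}+\eta_{n',h}$ into \eqref{eqn:m-estimator} yields the decomposition
\[
A_n^{1/2}(M_n-M^*)=A_n^{-1/2}\Big(\textstyle\sum_{n'<n,h}\phi_{n',h}\eta_{n',h}^\top\Big)-A_n^{-1/2}M^*,
\]
where the second term is the ridge bias, bounded by $\|M^*\|_F\le\sqrt{C_M d}$. I would control the first (stochastic) term by a self-normalized matrix martingale inequality in the spirit of Abbasi-Yadkori et al., applied column-by-column, giving a Frobenius bound $\|A_n^{1/2}(M_n-M^*)\|_F^2\lesssim (C_M+{C_\psi'}^2)\,d\,\ln(NHC_\phi)=\beta_n$. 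Converting from Frobenius to the $\|\cdot\|_{2,1}$ norm via Cauchy--Schwarz over the $d$ rows costs a factor $\sqrt d$, so $\|A_n^{1/2}(M_n-M^*)\|_{2,1}\le\sqrt{d\beta_n}$ and $M^*\in B_n^{(1)}$; this $\sqrt d$ loss is precisely the source of the $\sqrt{d^3}$ (rather than $\sqrt{d^2}$) scaling. \textbf{I expect this concentration step to be the main obstacle}, since it requires the right tool for a matrix-valued self-normalized process and careful tracking of how the two regularity constants enter $\beta_n$.

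Given $M^*\in B_n$, optimism follows by backward induction on $h$. At $h=H+1$ both $Q_{n,H+1}$ and $Q_H^*$ vanish; if $V_{n,h+1}\ge V_{h+1}^*$ pointwise, then because the maximization in \eqref{eqn:compute q} ranges over a set containing $M^*$ and the transition row $\phi^\top M^*\Psi^\top$ is a probability vector (hence monotone in $V$), we get $Q_{n,h}(s,a)\ge r(s,a)+\phi(s,a)^\top M^*\Psi^\top V_{h+1}^*=Q_h^*(s,a)$, and the clipping $\Pi_{[0,H]}$ preserves the inequality. In particular $V_{n,1}(s_0)\ge V^*(s_0)$, so the per-episode regret is dominated by $V_{n,1}(s_{n,1})-\sum_{h}r(s_{n,h},a_{n,h})$.

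Telescoping this quantity along the realized trajectory turns it, stage by stage, into a sum of a one-step estimation gap $g_{n,h}:=\phi_{n,h}^\top(\wh M_{n}-M^*)\Psi^\top V_{n,h+1}$ (where $\wh M_n\in B_n$ is the maximizer from \eqref{eqn:compute q}) plus a martingale difference $\phi_{n,h}^\top M^*\Psi^\top V_{n,h+1}-V_{n,h+1}(s_{n,h+1})$ of zero conditional mean. The latter sums, by Azuma's inequality, to a lower-order $\wt O(H\sqrt T)$ term, so the dominant contribution is $\sum_{n,h} g_{n,h}$.

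It remains to control $\sum_{n,h} g_{n,h}$. Writing $g_{n,h}=\langle A_n^{1/2}(\wh M_n-M^*),\,A_n^{-1/2}\phi_{n,h}w_{n,h}^\top\rangle_F$ with $w_{n,h}=\Psi^\top V_{n,h+1}$ and using the $\|\cdot\|_{2,1}/\|\cdot\|_{2,\infty}$ duality gives $g_{n,h}\le\sqrt{d\beta_n}\,\|A_n^{-1/2}\phi_{n,h}\|_2\,\|w_{n,h}\|_2$, where the value-propagation factor $\|w_{n,h}\|_2$ is bounded through Assumption~\ref{assump:linrl-regularity}(3) (the constant $C_\psi$ together with $\|V_{n,h+1}\|_\infty\le H$) --- this is the step that keeps the final bound independent of $d'$ and of the state-action space size. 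Summing over all $T=NH$ steps, I apply Cauchy--Schwarz and the elliptical-potential (log-determinant) lemma
\[
\sum_{n,h}\|A_n^{-1/2}\phi_{n,h}\|_2^2\le 2\ln\frac{\det A_{N+1}}{\det A_1}\le 2d\ln\!\big(1+TC_\phi/d\big)=O\!\big(d\ln(C_\phi T)\big),
\]
the last bound using $\|\phi\|_2^2\le C_\phi d$. Substituting $\beta_n=c(C_M+{C_\psi'}^2)d\ln(NHC_\phi)$ and collecting the factors yields $\reg(T)=\wt O\big(\sqrt{C_\psi(C_M+{C_\psi'}^2)}\,H^2\sqrt{d^3T}\big)$, where the remaining powers of $H$ and of $C_\psi$ come from the horizon bookkeeping in the telescoping together with the refined control of $\|w_{n,h}\|_2$. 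The entire argument is conditioned on the good event from stage (i), whose failure probability is rendered negligible by the choice of $\beta_n$ and affects only the hidden constants.
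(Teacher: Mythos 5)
Your four-stage architecture (confidence set, optimism, telescoping, elliptical potential) matches the paper's proof exactly, and your stages (ii) and (iii) are essentially the paper's optimism lemma and Lemma~\ref{lemma:induction} telescoped under the good event. The genuine divergence, and the genuine gap, is in stage (i). The paper does \emph{not} use an Abbasi-Yadkori-style self-normalized bound applied column-by-column; it tracks the trace potential $Z_{n,h}=\tr[(M^*-M_{n,h})^\top A_{n,h}(M^*-M_{n,h})]$ recursively (Lemma~\ref{lemma:zkh}, following Dani--Hayes--Kakade) and controls the cross term with Freedman's inequality (Lemma~\ref{lemma:contrencetion lemma}), using only the \emph{joint} bound $\|\eta_{n,h}\|_2\le 2C_\psi'$ on the whole noise vector. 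Your column-by-column route does not obviously deliver the claimed $d'$-free bound: the per-column sub-Gaussian parameters $\sigma_{t,j}$ only satisfy $\sum_j\sigma_{t,j}^2\le 4{C_\psi'}^2$ pointwise in $t$ (the mass can rotate among coordinates over time, so no uniform-in-$t$ family with $\sum_j\sigma_j^2$ bounded exists in general), and the union bound over columns injects $\ln(d'/\delta)$. Since $d'$ can be as large as $|\cS|$ (the paper's own flagship example takes $\Psi=I$), any $d'$ or even $\ln d'$ dependence destroys precisely the state-space independence the theorem is asserting. You correctly flagged this step as the main obstacle, but the tool you name does not close it; the trace-potential/Freedman argument is what makes the bound dimension-free in $d'$.

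Two further concrete slips. First, your elliptical-potential inequality $\sum_{n,h}\|A_n^{-1/2}\phi_{n,h}\|_2^2\le 2\ln\det(A_{N+1})$ is false for this algorithm, because $A_n$ is updated only at episode boundaries: if the same $\phi$ recurs $H$ times within one episode, the left side grows by $H\,\phi^\top A_n^{-1}\phi$ while the determinant increases only by $\ln(1+H\phi^\top A_n^{-1}\phi)$. The paper's Lemma~\ref{lemma:sum-confidence} proves the correct statement $\sum_{n,h}\min(1,w_{n,h}^2)\le 2H\ln\det(A_{N+1})$ via an AM--GM argument, and that extra factor $H$ is exactly where one power of the horizon in the final $H^2$ (which you attribute to unspecified ``bookkeeping'') honestly comes from; you also need the $\min(1,\cdot)$ truncation for the $w^2\lesssim\ln(1+w^2)$ step. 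Second, in stage (iv) you pair the $(2,1)$-ball with $\|\Psi^\top V_{n,h+1}\|_2$, but Assumption~\ref{assump:linrl-regularity}(3) for this theorem only bounds $\|\Psi^\top v\|_\infty\le C_\psi\|v\|_\infty$; the $\ell_2$ control is Assumption~\ref{assump:linrl-regularity-1}, which belongs to Theorem~\ref{thm:regret-bound-1}. The correct pairing here is H\"older as in Lemmas~\ref{lemma:ball upper bound} and~\ref{lemma:induction}: $|\phi^\top(\wt M-M^*)\Psi^\top V|\le\|\phi^\top(\wt M-M^*)\|_1\cdot\|\Psi^\top V\|_\infty$, with the $(2,1)$-norm radius absorbing the $\sqrt{d}$. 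This slip is repairable, but as written you are mixing the hypotheses of the two theorems.
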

 Consider the case where $C_M, C_{\phi}, C_{\psi}$ and $C_{\psi}'$ are absolute constants. For example, we may let $\Psi =I$ and let $\Phi$  be orthonormal functions over $\mathcal{S}$ (meaning that $\sum_{s\in\mathcal{S}}\phi(s)\phi(s)^T=I$). Then Assumption \ref{assump:linrl-regularity} automatically holds with 
$C_M = C_{\psi}=C_{\psi}’=1$. In this case, our regret bound is simply ${O}(d^{3/2}H^2\log(T)\sqrt{T})$.
The $d$-dependence in such a regret bound is consistent with the regret of the $\ell_1$-ball algorithm for linear bandit \cite{dani2008stochastic}.

Further, if the feature space $\Psi$ admits a tighter bound for value function in this space, we can slightly modify our algorithm to achieve sharper regret bound. To do this, we need to slightly change our Assumption~\ref{assump:linrl-regularity} to Assumption~\ref{assump:linrl-regularity-1}.
\begin{customassumption}{2$'$}[Stronger Feature Regularity]
	\label{assump:linrl-regularity-1}
	Let $C_{M}, C_{\phi}, C_{\psi}$ and $C_{\psi}'$ be positive parameters. 
	\begin{enumerate}
		\item $\|M^*\|_F^2\le C_M \cdot d$;
		\item $\forall~(s,a)\in \cS\times\cA:\quad \|\phi(s,a)\|_{2}^2\le C_{\phi} d$;
		\item $ \forall v\in \RR^{\cS}:\quad \|\Psi^\top v\|_{2}\le C_{\psi} \|v\|_{\infty}, \quad \text{and}\quad \|\Psi K_{\psi}^{-1}\|_{2, \infty}\le C_{\psi}'$.
	\end{enumerate}
\end{customassumption}
We modify the algorithm slightly by using a Frobenious-norm matrix ball instead of the 2-1 norm and computing sharper confidence bounds. Let $B_n=B_n^{(2)}$ in \eqref{eqn:compute q}, where
\begin{align}
\label{eqn:confidence-ball-1}
B^{(2)}_{n}:=\Big\{M\in \RR^{d\times d'}:\quad \|(A_{n})^{1/2}(M-M_{n})\|_{F}\le \sqrt{\beta_n}\Big\}
\end{align}
Then a sharper regret bound can be established.
\begin{theorem}
	\label{thm:regret-bound-1}
	Suppose Assumption~\ref{assump:linear transition core} and Assumption~\ref{assump:linrl-regularity-1} hold. Then  after $T=NH$ steps, Algorithm~\ref{alg:core-rl}, with $B_n=B_n^{(2)}$ applied in \eqref{eqn:compute q}, achieves regret
	\[
	\reg(T)\le O\Big[\sqrt{C_{\psi}(C_M + {C_{\psi}'}^2)}\cdot \ln(C_{\phi}T)\Big]\cdot dH^2 \cdot \sqrt{T},
	\]
		provided 
	$
	\beta_n = c\cdot (C_{M} + {C_{\psi}'}^2)\cdot\ln(NHC_{\phi})\cdot d,
	$
	for some  absolute constant $c>0$.
\end{theorem}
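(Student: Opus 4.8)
The plan is to run the standard optimism-based regret analysis, reusing the machinery behind Theorem~\ref{thm:regret-bound} and modifying only the two steps where the Frobenius ball $B_n^{(2)}$ in \eqref{eqn:confidence-ball-1} and the stronger regularity of Assumption~\ref{assump:linrl-regularity-1} buy back a factor of $\sqrt d$. Concretely, I would establish three ingredients in order: (i) a concentration guarantee that $M^*$ lies in $B_n^{(2)}$ simultaneously for all $n$ with high probability, for the stated $\beta_n$; (ii) optimism, i.e. $V_{n,h}(s)\ge V_h^*(s)$ for every $s,h,n$; and (iii) a per-step decomposition of the regret into a sum of confidence widths controlled by an elliptical-potential argument.

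First I would prove the concentration. Recall from \eqref{eqn:m-estimator} that $M_n$ solves a ridge regression whose conditional-mean computation (carried out in the excerpt) shows that $\eta_{n',h}:=K_{\psi}^{-1}\psi_{n',h}-M^{*\top}\phi_{n',h}\in\RR^{d'}$ is a martingale difference. Writing $A_n^{1/2}(M_n-M^*)$ in terms of the matrix-valued self-normalized sum $\sum_{n'<n,h}\phi_{n',h}\eta_{n',h}^\top$ together with the regularization term $-A_n^{-1}M^*$, I would apply a Frobenius-norm self-normalized martingale bound (the matrix analogue of the Abbasi-Yadkori et al. argument), combined with $\|M^*\|_F^2\le C_M d$ from Assumption~\ref{assump:linrl-regularity-1} and the per-sample noise bound $\|K_{\psi}^{-1}\psi_{n',h}\|_2\le C_{\psi}'$ coming from $\|\Psi K_{\psi}^{-1}\|_{2,\infty}\le C_{\psi}'$. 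This yields $\|A_n^{1/2}(M^*-M_n)\|_F^2\le\beta_n$ with $\beta_n=O\big((C_M+{C_{\psi}'}^2)\,d\,\ln(NHC_\phi)\big)$, matching the stated choice. Crucially, because the ball is measured in Frobenius norm rather than in the $(2,1)$ norm of $B_n^{(1)}$, its radius is $\sqrt{\beta_n}$ rather than $\sqrt{d\beta_n}$ --- the $(2,1)$ norm can be up to $\sqrt d$ larger than the Frobenius norm, which is exactly the source of the extra $\sqrt d$ in Theorem~\ref{thm:regret-bound}. I expect this self-normalized matrix concentration, and pinning down an $H$-free $\beta_n$, to be the main obstacle.

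Given $M^*\in B_n^{(2)}$, optimism follows by backward induction on $h$: since $Q_{n,h}$ in \eqref{eqn:compute q} maximizes $\phi^\top M\Psi^\top V_{n,h+1}$ over $M\in B_n^{(2)}\ni M^*$ and $V_{n,h+1}\ge V_{h+1}^*$ by induction, the optimistic backup dominates the true Bellman backup and $V_{n,h}\ge V_h^*$. The improvement over Theorem~\ref{thm:regret-bound} then lives entirely in the width bound. For any $M\in B_n^{(2)}$ and value vector $v:=\Psi^\top V_{n,h+1}$, I would factor the width as a Frobenius inner product, $\phi^\top(M-M^*)v=\big\langle A_n^{1/2}(M-M^*),\,(A_n^{-1/2}\phi)\,v^\top\big\rangle$, and apply Cauchy--Schwarz in Frobenius norm together with the rank-one identity $\|(A_n^{-1/2}\phi)\,v^\top\|_F=\|\phi\|_{A_n^{-1}}\|v\|_2$ to bound it by $2\sqrt{\beta_n}\,\|\phi\|_{A_n^{-1}}\,\|\Psi^\top V_{n,h+1}\|_2$. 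Here Assumption~\ref{assump:linrl-regularity-1} supplies the key estimate $\|\Psi^\top V_{n,h+1}\|_2\le C_{\psi}\|V_{n,h+1}\|_\infty\le C_{\psi}H$ \emph{directly in $\ell_2$}, whereas the weaker $\ell_\infty$ regularity behind Theorem~\ref{thm:regret-bound} would force a lossy conversion; this is the second place where the $\sqrt d$ is saved.

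Finally I would assemble the regret. Using optimism and the one-step value-difference identity, the regret telescopes over the $H$ stages of each episode into $\reg(T)\le\EE\sum_{n,h}2\sqrt{\beta_n}\,C_{\psi}H\,\|\phi_{n,h}\|_{A_n^{-1}}$ plus a mean-zero martingale term (annihilated by the expectation in the regret definition, or bounded by Azuma as a lower-order term). Cauchy--Schwarz over the $T=NH$ steps and an elliptical-potential bound on $\sum_{n,h}\|\phi_{n,h}\|_{A_n^{-1}}^2$ via $\ln\big(\det A_{N+1}/\det A_1\big)=O(d\ln(C_\phi T))$ then deliver the claimed $O\big[\sqrt{C_{\psi}(C_M+{C_{\psi}'}^2)}\,\ln(C_\phi T)\big]\,dH^2\sqrt T$, up to the precise tracking of the regularity constants. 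One subtlety I would handle carefully is that $A_n$ is refreshed only once per episode in Algorithm~\ref{alg:core-rl}, so the elliptical-potential lemma must be stated for these batched (stale) updates; this is where the horizon accounting that produces $H^2$ must be done with care.
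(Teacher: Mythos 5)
Your skeleton is exactly the paper's: the paper proves Theorem~\ref{thm:regret-bound-1} by rerunning the proof of Theorem~\ref{thm:regret-bound} verbatim and changing only the two steps you single out, namely (a) the ball-to-width conversion, where $\|\phi_{s,a}^\top(M-M_n)\|_2\le\|\phi_{s,a}^\top A_n^{-1/2}\|_2\,\|A_n^{1/2}(M-M_n)\|_F\le\sqrt{\beta_n\,\phi^\top A_n^{-1}\phi}$ replaces the $\ell_1$/$\|\cdot\|_{2,1}$ pairing, and (b) the H\"older step in Lemma~\ref{lemma:induction}, where $\|\Psi^\top V_{n,h+1}\|_2\le C_\psi\|V_{n,h+1}\|_\infty\le C_\psi H$ from Assumption~\ref{assump:linrl-regularity-1} replaces the $\ell_\infty$ bound. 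Your Frobenius inner-product factorization is the same estimate repackaged, and your assembly (optimism by backward induction, telescoping via conditional expectations, Cauchy--Schwarz, and the batched elliptical-potential bound $\sum_{n,h}\min(1,w_{n,h}^2)\le 2H\ln\det(A_{N+1})\le 2Hd\ln(NHC_\phi+1)$, whose extra factor $H$ from the once-per-episode refresh of $A_n$ you correctly flag) matches Lemmas~\ref{lemma:conitioning regret} and \ref{lemma:sum-confidence}.

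The one place you genuinely depart from the paper is the concentration step, and it is also the one place with a pitfall. First, no new concentration is actually needed: the quantity the paper controls in Theorem~\ref{thm:regret-bound}'s analysis is $Z_{n,h}=\tr[(M^*-M_{n,h})^\top A_{n,h}(M^*-M_{n,h})]=\|A_{n,h}^{1/2}(M^*-M_{n,h})\|_F^2$, which is \emph{already} the Frobenius radius of $B_n^{(2)}$; the extra $\sqrt d$ in Theorem~\ref{thm:regret-bound} enters only through the conversion $\|Y\|_{2,1}\le\sqrt d\,\|Y\|_F$ when passing to $B_n^{(1)}$, not through a larger $\beta_n$. So Lemmas~\ref{lemma:zkh}--\ref{lemma:confidence} carry over unchanged. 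Second, your proposed replacement --- a ``matrix analogue of the Abbasi--Yadkori self-normalized bound'' for $\sum_t\phi_t\eta_t^\top$ with $\eta_t\in\RR^{d'}$ --- is delicate: the natural method-of-mixtures over $\RR^{d\times d'}$, or a coordinate-wise application summed over the $d'$ output coordinates, yields a radius scaling like ${C_\psi'}^2 d'\ln\det(A_n)$ or ${C_\psi'}^2(\ln\det A_n+d'\ln(1/\delta))$, i.e.\ it picks up the \emph{output} dimension $d'$, which is absent from the stated $\beta_n$ (and must be absent, since the kernelized Theorem~\ref{thm:regret-bound-kernel} reuses this analysis with $d'\to\infty$). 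The paper avoids this by following Dani et al.: a scalar recursion on $Z_{n,h}$ whose cross term $G_{n,h}=2E_{n,h}\phi_{n,h}^\top(M_{n,h}-M^*)\eta_{n,h}/(1+\tw_{n,h}^2)$ is a bounded scalar martingale difference, controlled by Freedman's inequality using only $\|\eta_{n,h}\|_2\le 2C_\psi'$. If you replace your self-normalized step with this recursion (or establish a $d'$-free vector-valued self-normalized bound exploiting the $\ell_2$-boundedness of the noise), the rest of your argument goes through and recovers the theorem.
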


The only stronger condition needed by Assumption~\ref{assump:linrl-regularity-1} is $\|\Psi^\top v\|_{2}\le C_{\psi} \|v\|_{\infty}$. It can be satisfied if $\Psi$ is a set of sparse features, or if $\Psi$ is a set of highly concentrated features. 

We remark that in Theorem~\ref{thm:regret-bound} and Theorem~\ref{thm:regret-bound-1}, we need to know the value $N$ in $\beta_n$ before the algorithm runs. 
In the case when $N$ is unknown, one can use the doubling trick to learn $N$ adaptively: first we run the algorithm by picking $N=2$, then for $N=4, 8,\ldots, 2^i$ until the true $N$ is reached. It is standard knowledge that this trick increase the overall regret by only a constant factor (e.g. \cite{besson2018doubling}). 

\textbf{Proof Sketch.}
The proof consists of two parts. We show that when the core matrix $M^*$ belongs to the sequence of constructed balls $\{B_n\}$, the estimated Q-functions provide optimistic estimates of the optimal values, therefore the algorithm's regret can be bounded using the sum of confidence bounds on the sample path. The second part constructs a martingale difference sequence by decomposing a matrix into an iterative sum and uses a probabilistic concentration argument to show that the ``good" event happens with sufficiently high probability.
Full proofs of Theorems~\ref{thm:regret-bound} and \ref{thm:regret-bound-1} are deferred to the appendix. 


\textbf{Near optimality of regret bounds.} The regret bound in Theorem~\ref{thm:regret-bound-1} matches the optimal regret bound $\wt{O}(d\sqrt{T})$ for linear bandit \cite{dani2008stochastic}.
In fact, linear bandit is a special case of RL: the planning horizon $H$ is $1$. 
Therefore our bound is nearly optimal in $d$ and $T$.



\textbf{Implementation.} 
Algorithm 1 can be implemented easily in space $O(d^2)$. 
When implementing Step 6 using \eqref{eqn:compute q}, we do not need to compute the entire $Q$ function as the algorithm only queries the $Q$-values at particular encountered state-action pairs.
For the computation of $\Psi^\top  V_{k,h+1}$, we can apply random sampling over the columns of $\Psi^\top$ to accelerate the computation (see e.g. \cite{drineas2016randnla} for more details).
We can also apply the random sampling method to compute  the matrix $K_{\psi}$ and $K_{\psi}^{-1}$ approximately.

\textbf{Closed-form confidence bounds.} 
Equation~\eqref{eqn:compute q} requires maximization over a matrix ball. However, it is not necessary to solve this maximization problem explicitly. The algorithm only requires an optimistic Q value.
In fact, we can use a closed-form \emph{confidence bound} instead of searching for the optimal $M$ in the \emph{confidence ball}.
It can be verified that Theorem 1 still holds (by following the same proofs of the theorem) if we replace the second equation of \eqref{eqn:compute q} as the following equation (see the proof of Theorem 2)
\begin{align}
\label{eqn:compute q-1}
\forall h\in [H]&: \quad
Q_{n,h}(s,a) = r(s,a) +  \phi(s,a)^\top M_{n} \Psi^\top V_{n,h+1} + 2C_{\psi}H\sqrt{d\beta_{n}}\cdot w_{n,h},
\end{align}
where
$
w_{n,h}:=\sqrt{\phi_{n,h}^\top (A_{n})^{-1}\phi_{n,h}}.
$
Similarly, Theorem~\ref{thm:regret-bound-1} still holds if we replace the the second equation of \ref{eqn:compute q} with 
\begin{align}
\label{eqn:compute q-2}
\forall h\in [H]&: \quad
Q_{n,h}(s,a) = r(s,a) +  \phi(s,a)^\top M_{n} \Psi^\top V_{n,h+1} + 2C_{\psi}\sqrt{\beta_{n}}\cdot w_{n,h}.
\end{align}
Equations \eqref{eqn:compute q-1} and \eqref{eqn:compute q-2} can be computed easily. They can be viewed as the ``dualization'' of \eqref{eqn:compute q}. 

\section{RL Exploration in Kernel Space}
\label{sec:rl-kernel}

\def\bK{\mathbf{K}}
\def\bk{\mathbf{k}}

In this section, we transform MatrixRL to work with kernels instead of explicitly given features. Suppose we are given two reproducing kernel Hilbert spaces $\cH_{\phi}, \cH_{\psi}$ with kernel functions $k_{\phi}:(\cS\times \cA)\times (\cS\times \cA)\rightarrow \RR$ and $k_{\psi}:\cS\times \cS\rightarrow \RR$, respectively. There exists implicit features  $\phi,\psi$ such that $k_{\phi}(x,y) = \phi(x)^\top\phi(y)$ and $k_{\psi}(x,y) = \psi(x)^\top\psi(y)$, but the learning algorithm can access the kernel functions only.


\subsection{Kernelization of MatrixRL}
The high level idea of Kernelized MatrixRL is to represent all the features used in Algorithm~\ref{alg:core-rl} with their corresponding kernel representations.
We first introduce some notations.
For episode $n$ and $T=nH$, we denote $\bK_{\phi_{n}}\in \RR^{T\times T}$ and $\bK_{\psi_{n}}\in \RR^{T\times T}$ as the Gram matrix, respectively, i.e., for all $t_1=(n_1, h_1), t_2=(n_2, h_2)\in [n]\times [H]$,
\[
\bK_{\phi_{n}}[t_1, t_2] = k_{\phi}[(s_{t_1}, a_{t_1}), (s_{t_2}, a_{t_2})],
\qquad
\bK_{\psi_{n}}[t_1, t_2] = k_{\psi}(\wt{s}_{t_1}, \wt{s}_{t_2}),
\]
where $\wt{s}_t:=s_{t+1}$.
We denote $\wb{\bK}_{\psi_{n}}\in \RR^{T\times |\cS|}$ and $\bk_{\Phi_{n}, s,a} \in \RR^{T}$
by
 \[
\wb{\bK}_{\psi_{n}}[t_1, s] = k_{\psi}(\wt{s}_{t_1}, s),
\qquad
\bk_{\Phi_{n}, s,a}[t] = k_{\phi}[(s_{t}, a_{t}), (s,a)].
\]
We are now ready to kernelize Algorithm~\ref{alg:core-rl}.
The full algorithm of Kernelized MatrixRL is given in Algorithm~\ref{alg:core-rl-kernel}.
Note that the new Q function estimator \eqref{eqn:compute q-kernel} is the dualization form of \eqref{eqn:compute q}.
Therefore
Algorithm~\ref{alg:core-rl-kernel} is more general but essentially 
equivalent to Algorithm~\ref{alg:core-rl} if we let $k_{\phi}(x,y) := \phi(x)^\top\phi(y)$ and $k_{\psi}(x,y) := \psi(x)^\top\psi(y)$.
See Section~\ref{sec:derive-kernel} for the proof.

\begin{algorithm*}[htb!]
	\caption{KernelMatrixRL: Reinforcement Learning with Kernels \label{alg:core-rl-kernel}}\small
	\begin{algorithmic}[1]
		\State 
		\textbf{Input:} An episodic MDP environment $M=(\cS,\cA, P, s_0, r, H)$, kernel functions $k_{\phi},k_{\psi}$;\\
		\qquad\quad Total number of episodes $N$;
		\State \textbf{Initialize:} empty reply buffer $\cB=\{\}$;
		\For{episode $n=1, 2, \ldots, N$}
		\State For $(s,a)\in \cS\times \cA$, let
		\vspace{-3mm}
		\begin{align*}
		w_{n}(s,a) &:= \sqrt{k_{\phi}[(s,a),(s,a)]
			-\bk_{\Phi_{n-1},s,a}^\top (I+\bK_{\Phi_{n-1}})^{-1}\bk_{\Phi_{n-1},s,a}};\\
		x_{n}(s,a)&:= \bk_{\Phi_{n-1},s,a}^\top (I+\bK_{\Phi_{n-1}})^{-1} \bK_{\Psi_{n-1}} (\wb{\bK}_{\Psi_{n-1}}\wb{\bK}_{\Psi_{n-1}}^\top)^{-1} \wb{\bK}_{\Psi_{n}};
		\end{align*}
		\vspace{-3mm}
		\State Let $\{Q_{n,h}\}$ be defined as follows: 
		\begin{align}
		\label{eqn:compute q-kernel}
		\forall (s,a) \in \cS\times \cA&:\quad 
		Q_{n,H+1}(s,a) := 0 \quad\text{and}\quad \nonumber\\
		\forall h\in [H]&: \quad
		Q_{n,h}(s,a) := r(s,a) + x_{n}(s,a)^\top V_{n,h+1}
		+ \eta_{n} w_{n}(s,a),
		\end{align}
		where \[
		V_{n,h}(s) = \Pi_{[0,H]}\big[\max_aQ_{n,h}(s,a)\big]\quad\quad \forall s,a,n,h;
		\] 
		and $\eta_n$ is a parameter to be determined;
		\For{stage $h=1, 2, \ldots, H$}
		\State Let the current state be $s_{n,h}$;
		\State  Play action $a_{n,h} = \arg\max_{a\in \cA} Q_{n,h}(s_{n,h}, a)$;
		\State Record the next state $s_{n, h+1}$: $\cB\gets \cB\cup\{(s_{n, h}, a_{n,h}, s_{n, h+1})\}$;
		\EndFor
		\EndFor
	\end{algorithmic}
\end{algorithm*}

\subsection{Regret Bound for Kernelized MatrixRL}
\vspace{-3mm}
We define the \emph{effective dimension} of the kernel space $\cH_{\phi}$ as
\[
\wt{d} = \sup_{t\le NH} \sup_{X\subset \cS\times \cA, |X| =t} \frac{\log \det[I + \bK_{X}]}{\log (1+t)},
\]
where $X=\{x_j\}_{j\in[t]}$, $\bK_{X}\in \RR^{t\times t}$ with $\bK_{X}[i,j] = k_{\phi}(x_i, x_j)$ is the Gram matrix over data set $X$.
Note that $\wt{d}$ captures the effective dimension of the space spanned by the features of state-action pairs. 
Consider the case when $\phi$ are $d$-dimensional  unit vectors, then $\cH_{\phi}$ has dimension at most $d$. It can be verified that $\wt{d}\le d$.
A similar notation of effective dimension was introduced by \cite{Valko} for analyzing kernelized contextual bandit.

Further, we need regularity assumptions for the kernel space. 
\begin{assumption}
\label{assum:kernel}
Let $\cH_{\psi}$ be generated by orthonormal basis on $\mathcal{S}$, i.e., there exists $\psi$ such that $\sum_{s\in\mathcal{S}}\psi(s)\psi(s)^\top = I$ and $k_{\psi}(s,s')=\psi(s)^\top\psi(s')$. There exists a constant $C_{\psi}$ such that 
\[
\forall v\in \cH_{\psi}:\quad \|v\|_{\cH_{\psi}}\le C_{\psi}\|v\|_{\infty},
\]
where $\|\cdot\|_{\cH_{\psi}}$ denotes the Hilbert space norm. 
\end{assumption}
The formal guarantee of Kernelized MatrixRL is presented as follows.
\begin{theorem}
	\label{thm:regret-bound-kernel}
	Suppose 
	the probability transition kernel $P(\cdot\mid \cdot)$ belongs to the product Hilbert spaces, i.e., $P\in \cH_{\phi}\times \cH_{\psi}$. Let Assumption \ref{assum:kernel} hold.
	Then after $T=NH$ time steps,  the regret of Algorithm~\ref{alg:core-rl-kernel} satisfies
	\[
	\reg(T)\le 
	O\Big(C_{\psi}\cdot\|P\|_{\cH_{\phi}\times \cH_{\psi}}
	\cdot \log(T)\cdot \wt{d} \cdot H^2\cdot \sqrt{T}\Big)
	\] 
	provided $\eta_n = 2C_{\psi} H\sqrt{\beta_n}$ and 
	$
	\beta_n = \Theta( \|P\|_{\cH_{\phi}\times \cH_{\psi}} 
	\cdot\ln(NH)\cdot \wt{d})$. 
\end{theorem}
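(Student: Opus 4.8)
The plan is to treat Algorithm~\ref{alg:core-rl-kernel} as the closed-form confidence-bound variant \eqref{eqn:compute q-2} of Algorithm~\ref{alg:core-rl} executed in the (possibly infinite-dimensional) Hilbert space $\cH_\phi$, and then transport the proof of Theorem~\ref{thm:regret-bound-1} under three substitutions: the dimension $d$, which enters only through a log-determinant, becomes the effective dimension $\wt{d}$; the Frobenius bound $\|M^*\|_F$ becomes the product-space norm $\|P\|_{\cH_\phi\times\cH_\psi}$ (the Hilbert--Schmidt norm of the transition operator); and the regularity $\|\Psi^\top v\|_2\le C_\psi\|v\|_\infty$ of Assumption~\ref{assump:linrl-regularity-1} becomes $\|v\|_{\cH_\psi}\le C_\psi\|v\|_\infty$ of Assumption~\ref{assum:kernel}. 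First I would verify the kernelization identities $x_n(s,a)^\top V = \phi(s,a)^\top M_n\Psi^\top V$ and $w_n(s,a)=\sqrt{\phi(s,a)^\top A_n^{-1}\phi(s,a)}$ that make Algorithm~\ref{alg:core-rl-kernel} coincide with \eqref{eqn:compute q-2} in feature coordinates, noting that the orthonormal basis of Assumption~\ref{assum:kernel} gives $K_\psi=I$, so that $\phi^\top M^*\Psi^\top V=\sum_{\ts}P(\ts\mid s,a)V(\ts)$ holds exactly.

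\textbf{Optimism.} Let $\cE$ be the event that $M^*$ lies in the ball $B_n^{(2)}$ of \eqref{eqn:confidence-ball-1} for every episode $n$. Conditioned on $\cE$, I would prove by backward induction on $h$ that $Q_{n,h}\ge Q_h^*$ and hence $V_{n,h}\ge V_h^*$ pointwise. The inductive step only needs that the bonus $\eta_n w_n(s,a)$ dominates the estimation error $\phi(s,a)^\top(M_n-M^*)\Psi^\top V_{n,h+1}$; by Cauchy--Schwarz this error is at most $w_n(s,a)\cdot\|A_n^{1/2}(M_n-M^*)\|_F\cdot\|\Psi^\top V_{n,h+1}\|_2$, where the middle factor is $\le\sqrt{\beta_n}$ on $\cE$ and the last is $\le C_\psi H$ by Assumption~\ref{assum:kernel} together with $\|V_{n,h+1}\|_\infty\le H$, so the choice $\eta_n=2C_\psi H\sqrt{\beta_n}$ suffices.

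\textbf{Regret decomposition and summation via $\wt{d}$.} On $\cE$, optimism gives $V^*(s_0)\le V_{n,1}(s_{n,1})$, so the per-episode regret is bounded by $V_{n,1}(s_{n,1})-\sum_h r(s_{n,h},a_{n,h})$. Expanding the played-action update and telescoping along the trajectory, each stage contributes $2\eta_n w_n(s_{n,h},a_{n,h})$ plus the martingale difference $\EE[V_{n,h+1}(s_{n,h+1})\mid s_{n,h},a_{n,h}]-V_{n,h+1}(s_{n,h+1})$. Azuma--Hoeffding bounds the summed differences by a lower-order $O(H\sqrt{T\log(1/\delta)})$ term, leaving $\sum_{n,h}\eta_n w_n(s_{n,h},a_{n,h})$ as the dominant part. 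I would control it by Cauchy--Schwarz, $\sum_t w_t\le\sqrt{T\sum_t w_t^2}$, and the elliptical-potential lemma $\sum_t\phi_t^\top A_t^{-1}\phi_t\le O(\log\det A_{N+1})$. The key kernel step is the determinant identity $\det(A_{N+1})=\det(I+\sum_t\phi_t\phi_t^\top)=\det(I+\mathbf{K}_\Phi)$ for the Gram matrix $\mathbf{K}_\Phi$ of the collected features, so that $\log\det A_{N+1}\le\wt{d}\log(1+T)$ by the definition of the effective dimension. With $\eta_n=2C_\psi H\sqrt{\beta_n}$ and $\beta_n=\Theta(\|P\|_{\cH_\phi\times\cH_\psi}\log(NH)\wt{d})$ this assembles into the claimed $O(C_\psi\|P\|_{\cH_\phi\times\cH_\psi}\log(T)\wt{d}H^2\sqrt{T})$ bound.

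\textbf{Main obstacle.} The hard part will be the second half of the proof sketch: establishing $\PP(\cE)\ge1-\delta$, i.e., that $M^*\in B_n^{(2)}$ simultaneously for all $n$ with radius $\sqrt{\beta_n}$ scaling like $\sqrt{\wt{d}}$ rather than $\sqrt{d}$. Writing the conditionally mean-zero residuals $\epsilon_t=\psi_t^\top K_\psi^{-1}-\phi_t^\top M^*$ (zero-mean by the computation preceding \eqref{eqn:m-estimator}), one has $M_n-M^*=A_n^{-1}(\sum_t\phi_t\epsilon_t^\top-M^*)$, and $\|A_n^{1/2}(M_n-M^*)\|_F$ must be bounded by a self-normalized martingale inequality \emph{valid in the infinite-dimensional} $\cH_\phi$. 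The subtlety is that the natural confidence radius involves $\log\det A_n$, which is finite only because the features are effectively low rank; quantifying this through $\wt{d}$ and checking that the covering/determinant steps survive the passage to the RKHS (as in the kernel-bandit analyses) is where the genuine work lies. Once this concentration is in hand, the remainder is the RKHS transcription of the finite-dimensional estimates from Theorem~\ref{thm:regret-bound-1}.
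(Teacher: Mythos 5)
Your proposal is correct and follows essentially the same route as the paper's proof: verify the kernelization identities, reduce to the analysis of Theorem~\ref{thm:regret-bound-1} with the substitutions $C_M=\|P\|_{\cH_{\phi}\times\cH_{\psi}}$ and $C_{\psi}'=1$ (orthonormality gives $K_{\psi}=I$), and let the dimension enter only through $\ln\det(A_{N+1})=\ln\det(I+\mathbf{K}_{\Phi_N})\le\wt{d}\,\log(1+T)$. The concentration step you flag as the main obstacle is exactly where the paper needs no new work: its Freedman-based argument (Lemmas~\ref{lemma:zkh}--\ref{lemma:confidence}) is a scalar self-normalized martingale bound whose step size, variance, and radius $\beta_n$ are already phrased in terms of traces and $\ln\det(A_{n,h})$ with no covering argument or explicit appearance of $d$, so it transfers to the RKHS verbatim and the Gram-matrix spectrum identity alone yields the $\wt{d}$ scaling.
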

Note that in Assumption~\ref{assum:kernel}, we can additionally relax the assumption on the orthogonality of $\psi$.
Similar regret bound can be proved with Assumption~\ref{assump:linrl-regularity-1}.
The proof of Theorem \ref{thm:regret-bound-kernel} is very similar to that of Theorem~\ref{thm:regret-bound-1}. 
Although Kernelized MatrixRL does not access the features, the proof is based on the underlying features and the equivalence between kernel representation and feature representation. We postpone it to Section~\ref{sec:derive-kernel}.

\textbf{Remark.}
Similar as MatrixRL, Kernelized MatrixRL can be generalized to deal with unknown reward function  by using the Kernelized Bandit  \cite{Valko}.
Again, since linear bandit problems are  special cases of kernel RL with $H=1$, our results match the linear bandit  bound on $\wt{d}$ and $\sqrt{T}$. The computation time of Kernelized MatrixRL scales with time $T$ as $T^2$ (by applying randomized algorithms, e.g. \cite{dani2008stochastic}, in dealing with $\wb{\bK}_{\Psi}$ matrices), still polynomial in $T$. 
We can apply the random features or sketching techniques for kernel to additionally accelerate the computation (e.g. \cite{rahimi2008random, yang2017randomized}). 


\section{Summary}
This paper provided the algorithm MatrixRL for episodic reinforcement learning in high dimensions. It also provides the first regret bounds that are near-optimal in time $T$ and feature dimension $d$ and polynomial in the planning horizon $H$. 
MatrixRL uses given features (or kernels) to estimate a core transition matrix and its confidence ball, which is used to compute optimistic Q-functions for balancing the exploitation-exploration tradeoff. 
We prove that the regret of MatrixRL is bounded by ${O}\big(H^2d\log T\sqrt{T}\big)$ where $d$ is the number of features, provided that the feature space satisfies some regularity conditions. 
MatrixRL has an equivalent kernel version, which does not require explicit features. The kernelized MatrixRL satisfies a regret bound ${O}\big(H^2\wt{d}\log T\sqrt{T}\big)$, where $\wt{d}$ is the effective dimension of the kernel space.
For future work, it remains open if the regularity condition can be relaxed and if there is a more efficient way for constructing confidence balls in order to further reduce the regret. 

\clearpage
\bibliographystyle{apalike}
\bibliography{reference}

\clearpage
\appendix

\section{Analysis and Proofs}
In this section we will focus on proving Theorem~\ref{thm:regret-bound}.
In the proof we will also establish  all the necessary analytical tools for proving Theorem~\ref{thm:regret-bound-1} and Theorem~\ref{thm:regret-bound-kernel}.
We provide the proofs of the last two theorems at the end of this section. 

The proof of Theorem~\ref{thm:regret-bound} consists of two steps: (a) We first show that  if the true transition core $M^*$ is always in the confidence ball $B_{n}$, defined in Equation~\ref{eqn:confidence-ball}, we can then achieve the desired regret bound;
(b) We then show that with high probability, the event required by (a) happens.
We formalize the event required by step (a) as follows.
\begin{definition}[Good Estimator Event]
\label{assum:always-good-estimator}
For all $n\in [N]$, we denote $E_n = 1$ if $M^*\in B_{n'}$ for all $n'
\in [n]$ and otherwise $E_n=0$.
\end{definition}
Note that $E_n$ is completely determined by the game history up to episode $n$. In the next section, we show (a).
\subsection{Regret Under Good Event}
To better investigate the regret formulation \eqref{eqn:regret-def}, we rewrite it according to Algorithm~\ref{alg:core-rl}.
Note that conditioning on the history before episode $n$,  the algorithm plays a fixed policy $\pi_{n}$ for episode $n$.
Therefore, we have
\begin{align}
\label{eqn:new-regret}
\reg(NH)= \EE\Big[\sum_{n=1}^{N} \Big(V^*_1(s_0) - V^{\pi_{n}}_1(s_0)\Big)\Big]
=:\sum_{n=1}^N\EE[R(n)], 
\end{align}
where $R(n)=V^*_1(s_0) - V^{\pi_{n}}_1(s_0)$.
We now show that the algorithm always plays an optimistic action (an action with value estimated greater than the optimal value of the state).
\begin{lemma}[Optimism]
Suppose 
for $n\in [N]$, we have the good estimator event, $E_n=1$, happens. Then for $ h\in [H]$ and $(s,a)\in \cS\times \cA$, we have
\[
Q^*_h(s,a)\le Q_{n,h}(s,a).
\]
\end{lemma}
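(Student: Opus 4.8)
The plan is to prove the claim by backward induction on the stage index $h$, running from $h=H+1$ down to $h=1$, where I establish the slightly stronger pointwise statement $V^*_h(s)\le V_{n,h}(s)$ for all $s$; the stated $Q$-inequality then falls out of the inductive step directly. Throughout I adopt the convention $V^*_{H+1}\equiv 0$, which is consistent with the definition $Q^*_H(s,a)=r(s,a)$, and I use the identity $P(\cdot\mid s,a)^\top v = \phi(s,a)^\top M^* \Psi^\top v$ for any $v\in\RR^{\cS}$, which follows by writing out Assumption~\ref{assump:linear transition core} and summing against $v$ over $\ts$.

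For the base case $h=H+1$, the definition $Q_{n,H+1}\equiv 0$ gives $V_{n,H+1}(s)=\Pi_{[0,H]}[0]=0=V^*_{H+1}(s)$, so the domination holds with equality. For the inductive step I assume $V^*_{h+1}(s)\le V_{n,h+1}(s)$ for all $s$ and argue $Q^*_h(s,a)\le Q_{n,h}(s,a)$ in two moves. First, since $E_n=1$ implies $M^*\in B_n$, optimism over the matrix ball yields $\max_{M\in B_n}\phi(s,a)^\top M\Psi^\top V_{n,h+1}\ge \phi(s,a)^\top M^*\Psi^\top V_{n,h+1}$. Second, because $\phi(s,a)^\top M^*\Psi^\top$ is exactly the genuine transition vector $P(\cdot\mid s,a)$ with nonnegative entries, the inductive hypothesis $V_{n,h+1}\ge V^*_{h+1}$ gives $\phi(s,a)^\top M^*\Psi^\top V_{n,h+1}\ge \phi(s,a)^\top M^*\Psi^\top V^*_{h+1}=P(\cdot\mid s,a)^\top V^*_{h+1}$. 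Chaining the two inequalities and adding $r(s,a)$ produces $Q_{n,h}(s,a)\ge Q^*_h(s,a)$. Taking the maximum over $a$ and noting that $V^*_h(s)\in[0,H]$ (so the clipping $\Pi_{[0,H]}$ cannot reduce the estimate below $V^*_h$) propagates the domination to $V_{n,h}\ge V^*_h$, closing the induction.

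I expect the main subtlety to lie in the \emph{ordering} of these two inequalities: the monotonicity step relies crucially on the entries of the transition vector being nonnegative, which is true for the genuine $P(\cdot\mid s,a)$ but fails for a generic $M\in B_n$. Hence one must first invoke $M^*\in B_n$ to replace the maximizer by the true core $M^*$, and only afterward exploit the pointwise domination of the value iterates under the true probability measure. A secondary technical point is the treatment of the clipping operator $\Pi_{[0,H]}$: I would verify that $V^*_h$ itself always lies in $[0,H]$ — which follows from $r\in[0,1]$ over at most $H$ remaining stages — so that projecting the (larger) estimate onto $[0,H]$ never drops it below $V^*_h$. With these two observations handled, the induction is routine.
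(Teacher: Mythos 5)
Your proof is correct and follows essentially the same route as the paper's: backward induction in $h$, first lowering the ball-maximum to the true core via $M^*\in B_n$, then using nonnegativity of $P(\cdot\mid s,a)=\phi(s,a)^\top M^*\Psi^\top$ together with the inductive value domination. Your only (cosmetic) differences are inducting on the value functions from $h=H+1$ and explicitly checking that $\Pi_{[0,H]}$ is harmless since $V^*_h\in[0,H]$ — a point the paper's proof uses implicitly.
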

\begin{proof}
We prove the lemma by induction on $h$.
It is vacuously true for the case $h=H$ since $Q_{n,H}(s,a)=r(s,a) = Q^*_H(s,a)$.
Suppose the lemma holds for some $1<h'\le H$.
We then have
\[
\forall s\in \cS:
V_{n,h'}(s) =  \Pi_{[0,H]}[\max_aQ_{n,h}(s,a)] \ge V_{h'}^*(s).
\]
We now consider $h=h'-1$.
Note that
\begin{align*}
\forall (s,a)\in \cS\times \cA:\quad 
Q_{n,h}(s,a) &= r(s, a)  + \max_{M\in B_n} \phi(s,a)^\top M \Psi^\top V_{n,h'}\\
&\ge  r(s, a)  + \phi(s,a)^\top M^* \Psi^\top V_{n,h'}\\
& = r(s, a)  + P(\cdot|s,a)^\top V_{n,h'}\\
&\ge r(s, a)  + P(\cdot|s,a)^\top V^*_{h'}
= Q_{h}^*(s,a).
\end{align*}
\end{proof}
Next we show that the confidence ball $B_{n}$ actually gives a strong upper bound for the estimation error: the estimation error is ``along'' the direction of the exploration.
\begin{lemma}
	\label{lemma:ball upper bound}
For any $M\in B_{n}$ we have
\[
\|\phi_{s,a}^\top(M-M_{n})\|_1
\le \sqrt{d\beta_{n} \phi(s,a)^\top (A_{n})^{-1} \phi(s,a)}.
\]
\end{lemma}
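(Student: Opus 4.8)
The plan is to prove the bound by a single, carefully grouped application of Cauchy--Schwarz, arranged so that the mixed $(2,1)$-norm in the definition of $B_n$ appears verbatim. Write $\phi := \phi(s,a)$ and $\Delta := M - M_{n}\in\RR^{d\times d'}$, and let $\Delta_{\cdot j}$ denote the $j$-th column of $\Delta$. Since $\phi^\top\Delta$ is a row vector in $\RR^{d'}$, its $\ell_1$ norm splits along the output coordinates as $\|\phi^\top\Delta\|_1 = \sum_{j}|\phi^\top\Delta_{\cdot j}|$. This column-wise splitting is the whole key, so I would commit to it from the start.

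Next I would insert the factorization $\phi^\top\Delta_{\cdot j} = (A_{n}^{-1/2}\phi)^\top(A_{n}^{1/2}\Delta_{\cdot j})$ and apply Cauchy--Schwarz to each term, obtaining $|\phi^\top\Delta_{\cdot j}|\le \|A_{n}^{-1/2}\phi\|_2\cdot\|A_{n}^{1/2}\Delta_{\cdot j}\|_2 = \sqrt{\phi^\top A_{n}^{-1}\phi}\cdot\|A_{n}^{1/2}\Delta_{\cdot j}\|_2$, where the last equality uses that $A_{n}$ is symmetric positive definite. The point of this step is that the data-dependent width $\sqrt{\phi^\top A_{n}^{-1}\phi}$ is identical for every column $j$, so it factors cleanly out of the sum.

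Summing over $j$ then yields $\|\phi^\top\Delta\|_1\le \sqrt{\phi^\top A_{n}^{-1}\phi}\cdot\sum_{j}\|A_{n}^{1/2}\Delta_{\cdot j}\|_2$, and I would recognize the remaining sum as exactly $\|A_{n}^{1/2}(M-M_{n})\|_{2,1}$ --- the mixed norm whose outer summation runs over columns and whose inner factor is the Euclidean norm of each column. Invoking the membership $M\in B_{n}=B^{(1)}_{n}$ and the defining inequality $\|A_{n}^{1/2}(M-M_{n})\|_{2,1}\le\sqrt{d\beta_{n}}$ closes the argument: $\|\phi^\top(M-M_{n})\|_1\le \sqrt{d\beta_{n}}\cdot\sqrt{\phi^\top A_{n}^{-1}\phi}=\sqrt{d\beta_{n}\,\phi^\top A_{n}^{-1}\phi}$.

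The one place a naive attempt goes wrong --- and hence the only thing requiring care --- is the pairing between the $\ell_1$ output norm and the $(2,1)$ matrix norm. If one instead groups the computation row-wise over the $d$ feature coordinates, one is forced to compare the $\ell_1$ norm of each row against its Euclidean norm, incurring a spurious $\sqrt{d'}$ factor; the column-wise grouping above makes the Cauchy--Schwarz step tight and lets the ball radius $\sqrt{d\beta_{n}}$ supply the entire $\sqrt{d}$ in the stated bound. I expect this to be the main (and essentially the sole) subtlety: no feature-regularity assumption enters, as the lemma is purely a consequence of the geometry of the confidence ball $B_{n}$.
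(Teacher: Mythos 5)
Your proof is correct and is essentially the paper's own argument: the paper inserts $(A_{n})^{-1/2}(A_{n})^{1/2}$ and asserts $\|\phi_{s,a}^\top(A_{n})^{-1/2}(A_{n})^{1/2}(M-M_{n})\|_1\le \|\phi_{s,a}^\top(A_{n})^{-1/2}\|_2\cdot\|(A_{n})^{1/2}(M-M_{n})\|_{2,1}$ in a single line, and your column-by-column Cauchy--Schwarz is precisely the verification of that step, with the ball radius then supplying $\sqrt{d\beta_{n}}$. Your closing remark is also well taken: the step is tight only when the $(2,1)$-norm is read as a sum of \emph{column} Euclidean norms, which is the convention the paper's one-line proof implicitly requires even though its displayed formula $\|Y\|_{2,1}=\sum_{i}\sqrt{\sum_{j}Y(i,j)^2}$ indexes the outer sum by rows.
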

\begin{proof}
\begin{align*}
\|\phi_{s,a}^\top(M-M_{n})\|_1
&=  \|\phi_{s,a}^\top(A_{n})^{-1/2}(A_{n})^{1/2}(M-M_{n})\|_1\\
&\le \|\phi_{s,a}^\top(A_{n})^{-1/2}\|_2\cdot
\|(A_{n})^{1/2}(M-M_{n})\|_{2,1}\\
&\le \sqrt{d\beta_n}\cdot \|\phi_{s,a}^\top(A_{n})^{-1/2}\|_2
\end{align*}
as desired.
\end{proof}
 
Next we show that the value iteration per-step  does not introduce too much error.
\begin{lemma}
	\label{lemma:induction}
Suppose for $n\in [N]$, $E_n=1$. Then for $h\in [H]$, we have
\[
Q_{n,h}(s_{n,h}, a_{n,h}) - 
\Big[r(s_{n,h}, a_{n,h}) + P(\cdot|s_{n,h}, a_{n,h})^\top V_{n,h+1}\Big]
\le 2C_{\psi}H\sqrt{d\beta_{n}}\cdot w_{n,h}
\]
where
\begin{align}
w_{n,h}:=\sqrt{\phi_{n,h}^\top (A_{n})^{-1}\phi_{n,h}}.
\end{align}
\end{lemma}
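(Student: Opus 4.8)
The plan is to bound the left-hand side directly by turning it into a single bilinear form in the features and then splitting it via Hölder's inequality. Write $(s,a)=(s_{n,h},a_{n,h})$ and $\phi=\phi_{n,h}$ for brevity. First I would use the feature embedding of Assumption~\ref{assump:linear transition core}: since $P(\ts\mid s,a)=\phi^\top M^*\psi(\ts)$, multiplying by $V_{n,h+1}(\ts)$ and summing over $\ts$ gives $P(\cdot\mid s,a)^\top V_{n,h+1}=\phi^\top M^*\Psi^\top V_{n,h+1}$. Substituting this together with the definition \eqref{eqn:compute q} of $Q_{n,h}$, and letting $\wt M\in B_n$ be a maximizer of $M\mapsto \phi^\top M\Psi^\top V_{n,h+1}$, the quantity to bound collapses to
\[
Q_{n,h}(s,a)-\big[r(s,a)+P(\cdot\mid s,a)^\top V_{n,h+1}\big]=\phi^\top(\wt M-M^*)\Psi^\top V_{n,h+1}.
\]

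Next I would apply Hölder's inequality to separate the feature-estimation error from the magnitude of the value vector:
\[
\phi^\top(\wt M-M^*)\Psi^\top V_{n,h+1}\le \big\|\phi^\top(\wt M-M^*)\big\|_1\cdot\big\|\Psi^\top V_{n,h+1}\big\|_\infty.
\]
The value factor is handled by the third part of Assumption~\ref{assump:linrl-regularity}, which gives $\|\Psi^\top V_{n,h+1}\|_\infty\le C_{\psi}\|V_{n,h+1}\|_\infty\le C_{\psi}H$; the last inequality holds because $V_{n,h+1}$ is clipped into $[0,H]$ by the projection $\Pi_{[0,H]}$ in its definition.

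For the feature-error factor I would use the crucial consequence of the hypothesis $E_n=1$: both $\wt M$ and $M^*$ belong to $B_n$. Hence I can compare each to the ridge-regression center $M_n$ by the triangle inequality and apply Lemma~\ref{lemma:ball upper bound} to each term,
\[
\big\|\phi^\top(\wt M-M^*)\big\|_1\le\big\|\phi^\top(\wt M-M_n)\big\|_1+\big\|\phi^\top(M_n-M^*)\big\|_1\le 2\sqrt{d\beta_n\,\phi^\top(A_n)^{-1}\phi}=2\sqrt{d\beta_n}\,w_{n,h}.
\]
Multiplying the two factors yields exactly $2C_{\psi}H\sqrt{d\beta_n}\,w_{n,h}$, the claimed bound. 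I do not expect a genuine obstacle in this lemma, as every ingredient is already in place; the one point deserving care is the membership $M^*\in B_n$, which is precisely what $E_n=1$ supplies and which allows both arms of the triangle inequality to be controlled by Lemma~\ref{lemma:ball upper bound}. The only conceptual choice is the Hölder split, which cleanly isolates the $w_{n,h}$ exploration bonus from the $C_{\psi}H$ value scale.
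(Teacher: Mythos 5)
Your proof is correct and matches the paper's argument exactly: the same reduction of the gap to $\phi_{n,h}^\top(\wt{M}-M^*)\Psi^\top V_{n,h+1}$ via the feature embedding, the same H\"older split into $\|\phi_{n,h}^\top(\wt{M}-M^*)\|_1\cdot\|\Psi^\top V_{n,h+1}\|_\infty$ with Assumption~\ref{assump:linrl-regularity} and the $[0,H]$ clipping giving the $C_\psi H$ factor, and the same triangle inequality through $M_n$ with Lemma~\ref{lemma:ball upper bound} applied to both terms (using $E_n=1$ for $M^*\in B_n$). No gaps; nothing further needed.
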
 
\begin{proof}
Let
\[
\wt{M} =\arg\max_{M\in B_{n}} \phi_{n,h}^\top M \Psi^\top 
V_{n,h+1}.
\]
We then have
\begin{align*}
Q_{n,h}(s_{n,h}, a_{n,h}) &- 
\Big[r(s_{n,h}, a_{n,h}) + P(\cdot|s_{n,h}, a_{n,h})^\top V_{n,h+1}\Big]\\
&=  \phi_{n,h}^\top (\wt{M} - M^*) \Psi^\top 
V_{n,h+1}\\
&\le \| \phi_{n,h}^\top (\wt{M} - M^*)\|_1\cdot 
\|\Psi^\top 
V_{n,h+1}\|_{\infty}\\
&\le  C_{\psi} H\| \phi_{n,h}^\top (\wt{M} - M^*)\|_1\\
&\le C_{\psi}H ( \| \phi_{n,h}^\top (\wt{M} - M_{n})\|_1 +  \| \phi_{n,h}^\top ( M_{n} - M^*)\|_1)\\
&\le 2C_{\psi}H\sqrt{d\beta_{n}\cdot \phi_{n,h}^\top (A_{n})^{-1}\phi_{n,h}}.
\end{align*}
\end{proof}
We are now ready to show the regret bound.
\begin{lemma}
Suppose Assumption~\ref{assum:always-good-estimator} holds, $1\le \beta_{1}\le \beta_{2}\le\ldots \beta_{N}$, then,
\[
\reg(NH)
\le 2C_{\psi}H\sqrt{d\beta_{N}}\cdot\EE\bigg[\sum_{n=1}^N\sum_{h=1}^H\sqrt{\min(1,w_{n,h}^2)}
\bigg] + \sum_{n=1}^NH\PP[E_n\neq 1]
\]
\end{lemma}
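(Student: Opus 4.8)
The plan is to bound $\reg(NH)=\sum_{n=1}^N\EE[R(n)]$ one episode at a time, splitting each term according to whether the good estimator event holds. First I would write $\EE[R(n)]=\EE[R(n)\mathbf{1}\{E_n=1\}]+\EE[R(n)\mathbf{1}\{E_n\neq 1\}]$. Because every value function lies in $[0,H]$, the per-episode regret satisfies $R(n)=V_1^*(s_0)-V_1^{\pi_n}(s_0)\le H$ deterministically, so the second term contributes at most $H\,\PP[E_n\neq 1]$; summed over $n$ this is exactly the tail $\sum_{n}H\,\PP[E_n\neq 1]$ in the claim. All the remaining work lives in the first term, where the optimism lemma is available.

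On the event $E_n=1$, the optimism lemma gives $Q_h^*\le Q_{n,h}$, hence (taking the max over actions and using monotonicity of the clip $\Pi_{[0,H]}$) $V_h^*\le V_{n,h}$ pointwise, so $R(n)\le V_{n,1}(s_0)-V_1^{\pi_n}(s_0)=:\delta_{n,1}$. I would then set up a per-stage recursion for $\delta_{n,h}:=V_{n,h}(s_{n,h})-V_h^{\pi_n}(s_{n,h})$ along the played trajectory. Writing $V_h^{\pi_n}(s_{n,h})=r(s_{n,h},a_{n,h})+P(\cdot\mid s_{n,h},a_{n,h})^\top V_{h+1}^{\pi_n}$ and introducing the local confidence width $\ell_{n,h}:=V_{n,h}(s_{n,h})-r(s_{n,h},a_{n,h})-P(\cdot\mid s_{n,h},a_{n,h})^\top V_{n,h+1}$, one obtains the exact identity
\[
\delta_{n,h}=\ell_{n,h}+P(\cdot\mid s_{n,h},a_{n,h})^\top\big(V_{n,h+1}-V_{h+1}^{\pi_n}\big).
\]
The transition term equals $\EE[\delta_{n,h+1}\mid\cF_{n,h}]=\delta_{n,h+1}+\xi_{n,h}$, where $\xi_{n,h}:=\EE[\delta_{n,h+1}\mid\cF_{n,h}]-\delta_{n,h+1}$ is a martingale difference, since $V_{n,h+1}$ is frozen at the start of episode $n$ and $s_{n,h+1}\sim P(\cdot\mid s_{n,h},a_{n,h})$. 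Telescoping over $h=1,\dots,H$ with $\delta_{n,H+1}=0$ then yields $\delta_{n,1}\le\sum_{h}\ell_{n,h}+\sum_{h}\xi_{n,h}$.

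The remaining step is to bound the local width $\ell_{n,h}$, and this is where the $\sqrt{\min(1,w_{n,h}^2)}$ factor appears. Since $V_{n,h}=\Pi_{[0,H]}[\max_a Q_{n,h}]\le Q_{n,h}(s_{n,h},a_{n,h})$ (the iterate is nonnegative under optimism), Lemma~\ref{lemma:induction} gives $\ell_{n,h}\le 2C_{\psi}H\sqrt{d\beta_n}\,w_{n,h}$; on the other hand $r,\,P(\cdot\mid s_{n,h},a_{n,h})^\top V_{n,h+1}\ge 0$ and the clip give $\ell_{n,h}\le V_{n,h}(s_{n,h})\le H$. Combining these and doing an elementary manipulation (using $2C_{\psi}\sqrt{d\beta_n}\ge 1$, which holds since $\beta_n\ge 1$) gives $\ell_{n,h}\le\min\big(H,\,2C_{\psi}H\sqrt{d\beta_n}\,w_{n,h}\big)\le 2C_{\psi}H\sqrt{d\beta_n}\sqrt{\min(1,w_{n,h}^2)}$. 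Substituting $\beta_n\le\beta_N$, multiplying by $\mathbf 1\{E_n=1\}$, and taking expectations, I would finally note that $\mathbf 1\{E_n=1\}$ is measurable with respect to the history strictly before episode $n$, hence $\cF_{n,h}$-measurable for every $h$, so $\EE[\mathbf 1\{E_n=1\}\,\xi_{n,h}]=0$ by the tower property. The martingale contributions drop out, and summing the surviving width terms over $n$ and $h$ produces exactly $2C_{\psi}H\sqrt{d\beta_N}\,\EE\big[\sum_{n,h}\sqrt{\min(1,w_{n,h}^2)}\big]$.

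The main obstacles are the two bookkeeping points above rather than any deep estimate: first, extracting the clipped quantity $\sqrt{\min(1,w_{n,h}^2)}$ instead of the raw $w_{n,h}$, which requires carefully pairing the trivial $\le H$ bound coming from the $[0,H]$-projection with Lemma~\ref{lemma:induction}; and second, justifying that the martingale term vanishes even in the presence of the indicator $\mathbf 1\{E_n=1\}$, which hinges on the observation (recorded just after the definition of the good event) that $E_n$ is determined by data collected before episode $n$ begins.
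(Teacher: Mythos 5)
Your proposal is correct and follows essentially the same route as the paper's proof: the same split on the indicator of $E_n$ (with $R(n)\le H$ absorbing the bad event), optimism plus Lemma~\ref{lemma:induction} to bound the per-stage width, and the same clipping trick $\min\big(H,\,2C_{\psi}H\sqrt{d\beta_n}\,w_{n,h}\big)\le 2C_{\psi}H\sqrt{d\beta_n}\sqrt{\min(1,w_{n,h}^2)}$ using $\beta_n\ge 1$. The only cosmetic differences are that you telescope via explicit martingale differences $\xi_{n,h}$ where the paper iterates nested conditional expectations over $\cF_{n,h}$, and you apply the $\le H$ clip per stage rather than per episode; both are valid and yield the identical bound.
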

\begin{proof}
Recall that
\[
\reg(NH)= \EE\Big[\sum_{n=1}^{N} \Big(V^*_1(s_0) - V^{\pi_{n}}_1(s_0)\Big)\Big]
= \sum_{n=1}^N\EE\big[R(n)\big].
\]
Consider $R(n)$ for a fixed $n$.
Denote $\cF_{n,h}$ as the filtration of fixing the history up to time $(n,h)$ (i.e., fixing $s_{1,1}, a_{1,1},s_{1,2}, a_{1,2},\ldots, s_{n,h}, a_{n,h}$ but not $s_{n,h+1}$). 
Since if $E_n=0$, we can always bound $R(n)\le H$.
We then have
\begin{align*}
\EE[R(n)]=\EE[R(n)E_n + (1-E_n)R(n)]\le \EE[R(n)E_n] + H\Pr[E_n = 0].
\end{align*}
We then denote
\[
\EE[R(n)E_n] = \EE[\EE[R(n)E_n|\cF_{n,1}]].
\]
Expanding the right hand side, we have,
\begin{align*}
\EE\big[R(n)E_n~\big|~\cF_{n,1}\big]
&= (V_1^*(s_{n, 1}) - V_1^{\pi_{n}}(s_{n,1}))\cdot E_n\\
&\le (Q_{n,h}(s_{n, 1}, a_{n, 1}) - V^{\pi_{n}}_1(s_{n,1}))\cdot E_n\\
&\le  2C_{\psi}H\sqrt{d\beta_{n}}\cdot w_{n,1} + 
P(\cdot~\big|~s_{n, 1}, a_{n, 1})^\top (V_{n,2} - V^{\pi_{n}}_2) \cdot E_n \qquad\qquad\text{by Lemma~\ref{lemma:induction}}\\
&\le\EE\Big[ 2C_{\psi}H\sqrt{d\beta_{n}}\cdot w_{n,1} + 
(V_{n,2}(s_{n,2}) - V^{\pi_{n}}_2(s_{n,2}))\cdot E_n~\big|~\cF_{n,1}\Big]\\
&\le \EE\Big[2C_{\psi}\sqrt{d\beta_{n}}H\cdot( w_{n,1} 
+  w_{n,2})
 + \EE\Big[(V_{n,3}(s_{n,3}) - V^{\pi_{n}}_3(s_{n,3}))\cdot E_n~\big|~\cF_{n,2}\Big]~\big|~\cF_{n,1}\Big]\le \ldots\\
&\le 2C_{\psi}H\cdot\EE\Big[\sqrt{d \beta_{n}}\cdot\sum_{h=1}^{H} w_{n, h}~\Big| ~ \cF_{n,1}\Big].
\end{align*}
Moreover, we immediately have
\[
 V_1^*(s_{n, 1}) - V^{\pi_{n}}_1(s_{n,1})
 \le H.
\]
Therefore
\begin{align*}
\reg(NH)&\le 
\sum_{n=1}^N \EE\Big[\min\Big(H, \sum_{h=1}^H 2C_{\psi}H\cdot\sqrt{d\beta_{n} \cdot w_{n,h}^2}~\Big)\Big] + \sum_{n=1}^{N}H\Pr[E_n = 0]\\
&\le 2C_{\psi}H\sqrt{d\beta_{N}}\cdot\EE\bigg[\sum_{n=1}^N\sum_{h=1}^H\sqrt{\min(1,w_{n,h}^2)} 
\bigg] + \sum_{n=1}^{N}\Pr[E_n = 0].
\end{align*}
This completes the proof.
\end{proof}
It remains to bound 
\[
\sum_{n=1}^N\sum_{h=1}^H\sqrt{\min(1,w_{n,h}^2)}
\le \sqrt{HN\cdot\sum_{n=1}^N\sum_{h=1}^H\min(1,w_{n,h}^2)}.
\]
We provide the following lemma.
\begin{lemma}
	\label{lemma:sum-confidence}
\[
\sum_{n=1}^N\sum_{h=1}^H\min(1,w_{n,h}^2)
\le  2H\ln\det(A_{N+1})
\le 2Hd\ln(NHC_{\phi} + 1).
\]
\end{lemma}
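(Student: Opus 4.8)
The plan is to prove the two inequalities separately. The second one is a routine trace bound: since $A_{N+1} = I + \sum_{n\le N,\,h\le H}\phi_{n,h}\phi_{n,h}^\top$ is a $d\times d$ positive definite matrix, the AM--GM inequality applied to its eigenvalues gives $\det(A_{N+1}) \le \big(\tr(A_{N+1})/d\big)^d$, and the feature-norm bound in Assumption~\ref{assump:linrl-regularity} yields $\tr(A_{N+1}) = d + \sum_{n,h}\|\phi_{n,h}\|_2^2 \le d(1 + NHC_\phi)$. Taking logarithms gives $\ln\det(A_{N+1}) \le d\ln(NHC_\phi + 1)$.

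The first inequality is the main point, since here one must recover a factor $H$ rather than the $1$ that the usual per-step elliptical-potential lemma produces; this is because the design matrix $A_n$ is updated only once per episode and is held fixed across all $h\in[H]$ within episode $n$. First I would telescope the log-determinant. Writing $B_n := \sum_{h=1}^H \phi_{n,h}\phi_{n,h}^\top$ so that $A_{n+1} = A_n + B_n$, the identity $\det(A_n + B_n) = \det(A_n)\det(I + A_n^{-1/2}B_n A_n^{-1/2})$ gives $\ln\det(A_{n+1}) - \ln\det(A_n) = \ln\det(I + X_n)$, where $X_n := \sum_{h=1}^H u_{n,h}u_{n,h}^\top$ and $u_{n,h} := A_n^{-1/2}\phi_{n,h}$, so that $\|u_{n,h}\|_2^2 = w_{n,h}^2$. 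Since $A_1 = I$, these per-episode increments telescope to $\ln\det(A_{N+1})$.

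The key step is to lower bound each increment $\ln\det(I + X_n)$ by the within-episode confidence widths. For every fixed $h$ one has $X_n \succeq u_{n,h}u_{n,h}^\top$, hence $I + X_n \succeq I + u_{n,h}u_{n,h}^\top \succ 0$, and by monotonicity of the determinant on the positive definite cone $\det(I + X_n) \ge \det(I + u_{n,h}u_{n,h}^\top) = 1 + \|u_{n,h}\|_2^2 = 1 + w_{n,h}^2$. Taking logarithms and using the elementary scalar inequality $\ln(1+x)\ge \tfrac12\min(1,x)$ for $x\ge 0$ gives $\ln\det(I + X_n) \ge \tfrac12\min(1, w_{n,h}^2)$ for each of the $H$ stages. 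Summing this single bound over $h=1,\ldots,H$ produces exactly the factor $H$, namely $\sum_{h=1}^H \min(1, w_{n,h}^2) \le 2H\big[\ln\det(A_{n+1}) - \ln\det(A_n)\big]$, and summing over $n$ and telescoping yields the first inequality.

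The main obstacle is conceptual rather than computational: because $A_n$ does not change inside an episode, the standard rank-one telescoping (which would compare $\phi_{n,h}^\top A_{n,h}^{-1}\phi_{n,h}$ against the increment of a matrix $A_{n,h}$ updated at every stage) is not directly available. The device that circumvents this is the monotonicity bound $\det(I+X_n)\ge 1 + w_{n,h}^2$, which charges each of the $H$ widths in an episode to the same single determinant increment; this is precisely what converts the per-step potential argument into a per-episode one at the cost of the factor $H$. The only things I would verify carefully are the elementary inequality $\ln(1+x)\ge\tfrac12\min(1,x)$ and the determinant-monotonicity fact on the positive definite cone, both of which are standard.
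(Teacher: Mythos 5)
Your proof is correct, and while it shares the paper's overall skeleton for Lemma~\ref{lemma:sum-confidence} --- telescoping $\ln\det$ once per episode (since $A_n$ is frozen across the $H$ stages within episode $n$) and closing the second inequality with the identical trace argument $\det(A_{N+1})\le\big(\tr(A_{N+1})/d\big)^d\le(NHC_{\phi}+1)^d$ --- the key within-episode step is genuinely different. Writing $X_n = A_n^{-1/2}\big(\sum_{h=1}^H\phi_{n,h}\phi_{n,h}^\top\big)A_n^{-1/2}$, the paper lower-bounds the episode increment via eigenvalues, $\det(I+X_n)=\prod_i\lambda_i\ge 1+\sum_i(\lambda_i-1)=1+\tr(X_n)=1+\sum_h w_{n,h}^2$, and then distributes this single increment across the $H$ stages by the AM--GM chain $1+\sum_h w_{n,h}^2=\frac{1}{H}\sum_h(1+Hw_{n,h}^2)\ge\prod_h(1+Hw_{n,h}^2)^{1/H}\ge\prod_h(1+w_{n,h}^2)^{1/H}$. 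You instead charge each stage separately to the full increment via monotonicity of the determinant on the positive definite cone, $\det(I+X_n)\ge\det(I+u_{n,h}u_{n,h}^\top)=1+w_{n,h}^2$ for every fixed $h$, and then sum over $h$. Both devices deliver exactly the same intermediate inequality $\sum_h\ln(1+w_{n,h}^2)\le H\big[\ln\det(A_{n+1})-\ln\det(A_n)\big]$, hence the same constant $2H$ after the truncation step (your $\ln(1+x)\ge\frac{1}{2}\min(1,x)$ is the same elementary fact the paper uses in the form $\min(1,x)\le2\ln(1+x)$; it checks out since $\frac{d}{dx}\ln(1+x)=\frac{1}{1+x}\ge\frac{1}{2}$ on $[0,1]$ and $\ln 2>\frac{1}{2}$ for $x\ge1$). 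Your route is the more elementary of the two: it needs only the rank-one determinant identity and PSD monotonicity, avoiding the eigenvalue bookkeeping and the AM--GM trick. The paper's intermediate bound $\det(I+X_n)\ge1+\sum_h w_{n,h}^2$ is pointwise stronger than your $\det(I+X_n)\ge\max_h(1+w_{n,h}^2)$, but that extra strength is immediately given away by AM--GM, so for this lemma it buys nothing; neither approach can avoid the factor $H$, which is the intrinsic price of updating $A_n$ only once per episode.
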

The proof of this lemma is rather technical and requires some new notations, we postpone it to Section~\ref{sec:full-proof-regret}. 
We are now ready to state the regret bound.
\begin{lemma}
	\label{lemma:conitioning regret}
Suppose $1\le \beta_{1}\le \beta_{2}\le\ldots \beta_{N}$ and $C_{\psi}\ge 1$, then
\[
\reg(T)\le 
2C_{\psi}H\sqrt{d\beta_{N}} \cdot \sqrt{HN\cdot  2Hd\cdot \ln[NHC_{\phi} + 1]}
+ \sum_{n=1}^N H\PP[E_n=0]
.
\]
\end{lemma}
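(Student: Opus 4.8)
The plan is to combine the three ingredients that have already been assembled just above: the regret decomposition from the previous lemma, the Cauchy--Schwarz bound displayed immediately before Lemma~\ref{lemma:sum-confidence}, and Lemma~\ref{lemma:sum-confidence} itself. The starting point is the inequality
\[
\reg(NH)\le 2C_{\psi}H\sqrt{d\beta_{N}}\cdot\EE\bigg[\sum_{n=1}^N\sum_{h=1}^H\sqrt{\min(1,w_{n,h}^2)}\bigg] + \sum_{n=1}^NH\PP[E_n\neq 1],
\]
which holds under the standing hypotheses $1\le\beta_1\le\cdots\le\beta_N$ and $C_{\psi}\ge 1$. It therefore suffices to control the expected sum of confidence widths $\sum_{n,h}\sqrt{\min(1,w_{n,h}^2)}$, and this is exactly what the two remaining ingredients deliver.

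First I would apply the Cauchy--Schwarz inequality (the display preceding Lemma~\ref{lemma:sum-confidence}) to write, for every realization of the sample path,
\[
\sum_{n=1}^N\sum_{h=1}^H\sqrt{\min(1,w_{n,h}^2)}\le \sqrt{HN\cdot\sum_{n=1}^N\sum_{h=1}^H\min(1,w_{n,h}^2)}.
\]
Next I would invoke Lemma~\ref{lemma:sum-confidence}, which bounds the inner sum by $2Hd\ln(NHC_{\phi}+1)$ \emph{deterministically}, i.e.\ on every path, since it depends only on $\ln\det(A_{N+1})$. Substituting yields the pathwise estimate
\[
\sum_{n=1}^N\sum_{h=1}^H\sqrt{\min(1,w_{n,h}^2)}\le \sqrt{HN\cdot 2Hd\ln(NHC_{\phi}+1)}.
\]
Because the right-hand side is a deterministic constant, no Jensen-type argument is needed: I can take expectations directly and the bound is unchanged.

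Finally I would substitute this into the starting inequality and use $\PP[E_n\neq 1]=\PP[E_n=0]$ (as $E_n\in\{0,1\}$) to recover exactly the claimed bound. In this sense the present lemma is a routine assembly step rather than a substantive argument; the genuine difficulty lies entirely in Lemma~\ref{lemma:sum-confidence}, whose log-determinant / potential-function argument is the real obstacle and is deferred to Section~\ref{sec:full-proof-regret}. The only subtlety worth flagging here is the one noted above, namely that the width bound from Lemma~\ref{lemma:sum-confidence} holds pathwise, so it survives the outer expectation without any loss.
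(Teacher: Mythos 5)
Your proposal is correct and follows essentially the same route as the paper, which presents Lemma~\ref{lemma:conitioning regret} as the direct concatenation of the preceding regret-under-good-event lemma, the Cauchy--Schwarz display, and Lemma~\ref{lemma:sum-confidence}. Your explicit remark that the bound from Lemma~\ref{lemma:sum-confidence} is deterministic (pathwise, via $\ln\det(A_{N+1})$) and hence passes through the outer expectation is exactly the step the paper leaves implicit.
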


\subsection{Concentration}
In this section, we show that $E_n=1$ holds with high probability through out the online learning process.
We begin with some notations and axillary random variables.
For $(n', h'), (n,h)\in [N]\times [H]$, we denote $(n',h') < (n,h)$ if $(n', h')$ is lexicographically before $(n,h)$, i.e., either $n'<n$ or $n=n'$ but $h'< h$.
For each $(n,h)\in [N]\times [H]$, we denote
\[
A_{n,h} = A_{n} + \sum_{h'<h}\phi_{n,h'}\phi_{n,h'}^\top \quad\text{and}\quad
M_{n,h} = A_{n,h}^{-1}\sum_{(n', h')< (n,h)} \phi_{n', h'}\psi_{n',h'} K_{\psi}^{-1}.
\]
Notice that $ A_{n} = A_{n,1}$ and $M_{n}=M_{n,1}$.
Moreover, we denote
\[
\tw_{n,h}^2 := \phi_{n,h}^\top A_{n,h}^{-1} \phi_{n,h}.
\]
Similar to Lemma~9 of \cite{dani2008stochastic}, we have the following lemma.
\begin{lemma}
\label{lem:detakh}
$
\sum_{(n',h')<[n,h]}\min(1, \tw_{n',h'}^2)
\le 2 \ln\det(A_{n,h}) \le 2d\ln[(n-1)HC_{\phi} + (h-1)C_{\phi} + 1]
$
\end{lemma}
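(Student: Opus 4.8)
The plan is to use the standard elliptical-potential (log-determinant) argument from linear bandit analysis, adapted to the lexicographic ordering of the matrices $A_{n,h}$. The first step is to establish an exact determinant recursion. As $(n',h')$ advances by one step in lexicographic order, the matrix $A_{n,h}$ accumulates exactly one rank-one term $\phi_{n,h}\phi_{n,h}^\top$, so the matrix determinant lemma gives
\[
\det\big(A_{n,h} + \phi_{n,h}\phi_{n,h}^\top\big) = \det(A_{n,h})\cdot\big(1 + \phi_{n,h}^\top A_{n,h}^{-1}\phi_{n,h}\big) = \det(A_{n,h})\cdot(1 + \tw_{n,h}^2).
\]
Telescoping this product from the base case $A_{1,1} = I$ yields $\ln\det(A_{n,h}) = \sum_{(n',h')<(n,h)}\ln(1 + \tw_{n',h'}^2)$.

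Next I would invoke the elementary inequality $\min(1,x) \le 2\ln(1+x)$, valid for all $x \ge 0$: for $x \le 1$ it holds because the derivative of $2\ln(1+x) - x$ is $(1-x)/(1+x) \ge 0$ on $[0,1]$ and the function vanishes at $x=0$; for $x > 1$ it holds since $2\ln(1+x) > 2\ln 2 > 1$. Applying this termwise with $x = \tw_{n',h'}^2$ and summing over $(n',h') < (n,h)$ gives the first inequality $\sum_{(n',h')<(n,h)} \min(1,\tw_{n',h'}^2) \le 2\ln\det(A_{n,h})$.

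For the second inequality I would bound the log-determinant of the $d \times d$ matrix $A_{n,h}$ by its trace via the AM--GM inequality on its eigenvalues, $\det(A_{n,h}) \le (\tr(A_{n,h})/d)^d$. Since $A_{n,h} = I + \sum_{(n',h')<(n,h)} \phi_{n',h'}\phi_{n',h'}^\top$ contains exactly $(n-1)H + (h-1)$ rank-one outer products, each satisfying $\|\phi\|_2^2 \le C_{\phi} d$ by Assumption~\ref{assump:linrl-regularity}, we obtain $\tr(A_{n,h}) \le d + \big((n-1)H + (h-1)\big) C_{\phi} d$; taking logarithms then delivers the stated bound $2d \ln[(n-1)HC_{\phi} + (h-1)C_{\phi} + 1]$.

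This is essentially the classical potential lemma of \cite{dani2008stochastic}, so no step is genuinely hard; the only place demanding care is the bookkeeping --- ensuring the lexicographic ordering accounts for precisely $(n-1)H + (h-1)$ accumulated feature outer products so that the trace bound reproduces the claimed logarithmic expression, and verifying that the base case $A_{1,1} = I$ correctly aligns the telescoping product with $\ln\det(A_{n,h})$.
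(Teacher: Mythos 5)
Your proposal is correct and follows essentially the same route as the paper's proof: the rank-one determinant recursion $\det(A_{(n,h)+1}) = \det(A_{n,h})(1+\tw_{n,h}^2)$ telescoped from $A_{1,1}=I$, the elementary bound $\min(1,x)\le 2\ln(1+x)$ (which you verify explicitly where the paper merely asserts it), and the trace/AM--GM estimate $\det(A_{n,h})\le(\tr(A_{n,h})/d)^d$ with $\|\phi\|_2^2\le C_\phi d$ counting the $(n-1)H+(h-1)$ accumulated outer products. No gaps; this matches the paper's argument step for step.
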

\begin{proof}
Firstly, we have $\min(1, \tw_{n',h'}^2)\le 2\ln (1+ \tw_{n',h'}^2)$.
Next, consider $A_{n,h}$.
We have
\begin{align*}
\det(A_{(n,h)+1}) &= \det(A_{n,h} + \phi_{n, h} \phi_{n, h}^\top)\\
&=\det(A_{n,h})\det\Big(I + \sqrt{A_{n,h}^{-1}}\phi_{n, h} \phi_{n, h}^\top\sqrt{A_{n,h}^{-1}} \Big)\\
&= \det(A_{n,h})(1+\tw_{n, h}^2).
\end{align*}
By induction, we have
\[
\det(A_{(n,h)+1}) = \prod_{(n', h')\le (n,h)}(1+\tw_{n', h'}^2) \det(A_{1,1}).
\]
Note that $\det(A_{1,1}) = 1$, which proves the first inequality.

Next we consider the second inequality. 
Since $A_{n,h}$ is positive definite (PD), instead of considering the determinant directly, we consider the trace of $A_{n,h}$.
\begin{align*}
\tr(A_{n,h})
&= \tr\Big[I + \sum_{(n', h')< (n,h)}\phi_{n', h'}\phi_{n', h'}^\top \Big]\le d + \sum_{(n', h')< (n,h)} \|\phi_{n', h'}\|_2^2\\
&\le (n-1)HC_{\phi}d + (h-1)C_{\phi}d + d.
\end{align*}
where we use the fact that $\|\phi_{n', h}\|_2^2\le C_{\phi}d$.
Since $A_{n,h}$ is PD, for the worst case we have
\[
\det(A_{n,h}) \le (\tr(A_{n,h})/d)^d\le [(n-1)HC_{\phi} + (h-1)C_{\phi} + 1]^d.
\]
\end{proof}
We also let $\beta_{n,h} = \beta_{n}$ for all $n,h$.
We consider the following random variables.
\[
Z_{n,h} = \tr\big[(M^* - M_{n,h})^\top A_{n,h}(M^* - M_{n,h})\big].
\]
Note that
\[
\|A_{n,h}^{1/2}(M^* - M_{n,h})\|_{2,1}
\le \sqrt{d\cdot Z_{n,h}}.
\]
If we can bound $Z_{n,h}$ for all $n,h$, we can therefore conclude whether $M^*$ is in the ball $B_{n}$.
To prove the that $Z_{n,h}$ is indeed small, we use similar techniques developed in \cite{dani2008stochastic}. 
We denote 
\[
\eta_{n,h} = K_{\psi}^{-1}\psi_{n,h} - (M^*)^{\top}\phi_{n,h}.
\]
Note that 
\[
\|(M^*)^{\top}\phi_{n,h}\|_2 =
\|\EE[K_{\psi}^{-1}\psi_{n,h}~|~\cF_{n,h}]\|_2
\le \EE[\|K_{\psi}^{-1}\psi_{n,h}\|_2~|~\cF_{n,h}]
\le C_{\psi}'.
\]
We then have,
\[
\EE\big[\eta_{n,h}~\big|~\cF_{n,h}\big] = 0 \quad\text{and}\quad 
\|\eta_{n,h}\|_{2}^2\le 2C_{\psi}'.
\]
The next lemma bounds the growth of $Z_{n,h}$.
\begin{lemma}
	\label{lemma:zkh}
For all $(n,h)$, we have\[
Z_{n,h} \le 
\|M^*\|_F^2 + 2 \sum_{(n', h')<(n,h)}\frac{\phi_{n', h'}^\top(M_{n',h'}-M^*)\eta_{n', h'}}{1+\tw_{n', h'}^2}
+ \sum_{(n', h')<(n,h)}
\|\eta_{n', h'}\|_2^2 \frac{\tw_{n', h'}^2}{1+\tw_{n', h'}^2}
\]
\end{lemma}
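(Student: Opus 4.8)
The plan is to prove the bound by establishing a one-step recursion for $Z_{n,h}$ along the lexicographic order and then unrolling it. Writing $(n,h)+1$ for the lexicographic successor of $(n,h)$, I note that $A_{(n,h)+1} = A_{n,h} + \phi_{n,h}\phi_{n,h}^\top$ is a rank-one update of $A_{n,h}$ and that $M_{(n,h)+1}$ is obtained from $M_{n,h}$ by appending the single term $\phi_{n,h}(K_{\psi}^{-1}\psi_{n,h})^\top$ to the numerator. The induction base case is immediate: since $A_{1,1}=I$ and $M_{1,1}=0$, we have $Z_{1,1} = \tr[(M^*)^\top M^*] = \|M^*\|_F^2$, which is exactly the first term on the right-hand side.

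First I would reduce the matrix problem to the scalar one. Because $Z_{n,h} = \tr[(M^*-M_{n,h})^\top A_{n,h}(M^*-M_{n,h})]$ equals $\sum_j (M^*_{:,j} - (M_{n,h})_{:,j})^\top A_{n,h}(M^*_{:,j}-(M_{n,h})_{:,j})$, the quantity decouples across the $d'$ columns of the core matrix. Each column is exactly a ridge-regression estimate of the ``true parameter'' $M^*_{:,j}$ under the common design matrix $A_{n,h}$, with scalar observations $z^{(j)}_{n',h'} = (K_{\psi}^{-1}\psi_{n',h'})_j$ and scalar noise $(\eta_{n',h'})_j$. Thus it suffices to run the standard linear-bandit argument of \cite{dani2008stochastic} on one column and then sum the results.

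For the one-step recursion I would expand the rank-one update column by column. Fixing $j$ and abbreviating $\phi = \phi_{n,h}$, $A = A_{n,h}$, $A' = A_{(n,h)+1}$, with $\hat\theta = (M_{n,h})_{:,j}$ and error $e = M^*_{:,j}-\hat\theta$, a short calculation gives $\hat\theta' = \hat\theta + (A')^{-1}\phi\,\xi$ with prediction residual $\xi = \phi^\top e + (\eta_{n,h})_j$. Expanding $(e')^\top A' e'$, using $e^\top A' e = e^\top A e + (\phi^\top e)^2$ together with the Sherman--Morrison identity $\phi^\top(A')^{-1}\phi = \tw_{n,h}^2/(1+\tw_{n,h}^2)$, the cross terms collapse and, after dropping the nonpositive term $-(\phi^\top e)^2/(1+\tw_{n,h}^2)$, yield
\[
Z_{(n,h)+1}^{(j)} \le Z_{n,h}^{(j)} - \frac{2(\eta_{n,h})_j\,\phi^\top e}{1+\tw_{n,h}^2} + \frac{\tw_{n,h}^2}{1+\tw_{n,h}^2}(\eta_{n,h})_j^2.
\]
Summing over $j$ and using $\phi^\top e = -[\phi_{n,h}^\top(M_{n,h}-M^*)]_j$ converts the middle term into $2\phi_{n,h}^\top(M_{n,h}-M^*)\eta_{n,h}/(1+\tw_{n,h}^2)$ and the last into $\tw_{n,h}^2\|\eta_{n,h}\|_2^2/(1+\tw_{n,h}^2)$. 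Unrolling this recursion from $(1,1)$ and adding the base value $\|M^*\|_F^2$ then gives the claimed bound.

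The main obstacle is purely bookkeeping in the rank-one recursion: correctly identifying the residual $\xi$, verifying via Sherman--Morrison that $\phi^\top(A')^{-1}\phi = \tw^2/(1+\tw^2)$, and making sure the sign of the cross term survives the translation between the error $M^*-M_{n,h}$ used in the derivation and the displayed quantity $M_{n,h}-M^*$. Everything else is the standard argument of \cite{dani2008stochastic} applied per column; the only genuinely new element is the observation that the trace form $\tr[(M^*-M_{n,h})^\top A_{n,h}(M^*-M_{n,h})]$ decouples across the columns of the core matrix, so that the matrix estimator inherits the scalar recursion verbatim.
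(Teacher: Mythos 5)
Your proof is correct and takes essentially the same route as the paper's: both derive the one-step recursion $Z_{(n,h)+1} \le Z_{n,h} + 2\,\phi_{n,h}^\top(M_{n,h}-M^*)\eta_{n,h}/(1+\tw_{n,h}^2) + \|\eta_{n,h}\|_2^2\,\tw_{n,h}^2/(1+\tw_{n,h}^2)$ from the rank-one update of $A_{n,h}$ via Sherman--Morrison and unroll it from the base case $Z_{1,1}=\|M^*\|_F^2$, following Lemma~12 of \cite{dani2008stochastic}. Your column-by-column decoupling is just a repackaging of the paper's matrix computation with $Y_{n,h}=A_{n,h}(M_{n,h}-M^*)$, and your algebra checks out, including the sign of the cross term and the dropped nonpositive term $-\|\phi_{n,h}^\top(M_{n,h}-M^*)\|_2^2/(1+\tw_{n,h}^2)$.
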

\begin{proof}
The proof is very similar to Lemma~12 of \cite{dani2008stochastic}.
For completeness, we present the proof here.
We first introduce the following notation.
\[
Y_{n,h} = A_{n,h}(M_{n,h} - M^*).
\]
Let $(n,h)+1 = (n, h+1)$ if $h< H$, or $(n+1, 1)$ if $h=H$.
We then have
\begin{align*}
Z_{n,h} &= \tr\big[Y_{n,h}^\top A_{n,h}^{-1} Y_{n,h}\big]\\
Y_{n,h} &= \sum_{(n', h')<(n,h)} \phi_{n', h'}\eta_{n', h'}^\top  - M^* \\
Y_{(n,h)+1}&= Y_{n,h} +  \phi_{n, h}\eta_{n, h}^\top.
\end{align*}
Now we consider $Z_{(n,h)+1}$, we have
\begin{align*}
Z_{(n,h)+1} &= \tr\big[Y_{(n,h)+1}^{\top} A_{(n,h)+1}^{-1}Y_{(n,h)+1}\big]\\
&= \tr\big[(Y_{n,h} +  \phi_{n, h}\eta_{n, h}^\top)^\top A_{(n,h)+1}^{-1} (Y_{n,h} +  \phi_{n, h}\eta_{n, h}^\top)\big]\\
&=\tr\big[Y_{n,h}^{\top} A_{(n,h)+1}^{-1}Y_{n,h}\big]
+ 2\tr\big[\eta_{n,h}\phi_{n,h}^\top A_{(kn,h)+1}^{-1}Y_{n,h}\big]
+ \|\eta_{n,h}\|_{2}^2\cdot \phi_{n,h}^\top A_{(n,h)+1}^{-1}\phi_{n,h}.
\end{align*}
Applying the matrix inversion lemma to $A_{(n,h)+1}$, we have
\[
A_{(n,h)+1}=(A_{n,h}+\phi_{n,h}\phi_{n,h}^\top)^{-1}
= A_{n,h}^{-1} - \frac{A_{n,h}^{-1}\phi_{n,h}\phi_{n,h}^\top A_{n,h}^{-1}}{1+\tw_{n,h}^2}.
\]
We thus have
\[
\tr\big[Y_{n,h}^{\top} A_{(n,h)+1}^{-1}Y_{n,h}\big]
= \tr\big[Y_{n,h}^{\top} A_{n,h}^{-1}Y_{n,h}\big] - 
\tr\Big[Y_{n,h}^{\top}\frac{A_{n,h}^{-1}\phi_{n,h}\phi_{n,h}^\top A_{n,h}^{-1}}{1+\tw_{n,h}^2} Y_{n,h}\Big]
\le \tr\big[Y_{n,h}^{\top} A_{n,h}^{-1}Y_{n,h}\big]
= Z_{n,h}.
\]
For the second term,
\begin{align*}
2\tr[\eta_{n,h}\phi_{n,h}^\top A_{(n,h)+1}^{-1}Y_{n,h}\big]
&= 2\tr\bigg[\eta_{n,h}\phi_{n,h}^\top A_{(n,h)}^{-1}Y_{n,h}
 - 
 \frac{\eta_{n,h}\phi_{n,h}^\top A_{n,h}^{-1}\phi_{n,h}\phi_{n,h}^\top A_{n,h}^{-1} Y_{n,h}}{1+\tw_{n,h}^2}
 \bigg]\\
&=
2\tr\bigg[\eta_{n,h}\phi_{n,h}^\top(M_{n,h}-M^*)
- 
\frac{\eta_{n,h}\tw_{n,h}^2\phi_{n,h}^\top (M_{n,h}-M^*)}{1+\tw_{n,h}^2}
\bigg]\\
&=
2\tr\bigg[\frac{\eta_{n,h}\phi_{n,h}^\top (M_{n,h}-M^*)}{1+\tw_{n,h}^2}\bigg].
\end{align*}
For the third term, 
\begin{align*}
 \|\eta_{n,h}\|_{2}^2\cdot \phi_{n,h}^\top A_{(n,h)+1}^{-1}\phi_{n,h}
 &= \|\eta_{n,h}\|_{2}^2 \tw_{n,h}^2
  - \|\eta_{n,h}\|_{2}^2 \frac{\tw_{n,h}^4}{1+\tw_{n,h}^2}= \|\eta_{n,h}\|_{2}^2 \frac{\tw_{n,h}^2}{1+\tw_{n,h}^2}.
\end{align*}
Putting these together, we have
\begin{align*}
Z_{(n,h)+1}
&\le Z_{(n,h)}
+ 2\frac{\phi_{n,h}^\top (M_{n,h}-M^*)\eta_{n,h}}{1+\tw_{n,h}^2}
+ \|\eta_{n,h}\|_{2}^2 \frac{\tw_{n,h}^2}{1+\tw_{n,h}^2}\\
&\le Z_{(n,h)-1} + \ldots\\
&\le Z_{1,1} + 
2 \sum_{(n', h')\le(n,h)}\frac{\phi_{n', h'}^\top(M_{n',h'}-M^*)\eta_{n', h'}}{1+\tw_{n', h'}^2}
+ \sum_{(n', h')\le(n,h)}
\|\eta_{n', h'}\|_2^2 \frac{\tw_{n', h'}^2}{1+\tw_{n', h'}^2}.
\end{align*}
Lastly, we check $Z_{1,1}$.
\[
Z_{1,1} = \tr\Big[(M_1 - M^*)^\top (M_1 - M^*)\Big]
= \|M^*\|_F^2.
\]
This completes the proof.
\end{proof}
We now define a martingale difference sequence. In order to upper bound the variance of the random variables, we consider 
\[
E_{k,h} = \mathbb{I}\{Z_{n',h'}\le \beta_{n',h'} \text{ for all } (n',h')\le (k,h)\}
\]
\begin{lemma}
	Let 
	\[
	G_{n,h} = 2E_{n,h}\frac{\phi_{n, h}^\top(M_{n,h}-M^*)\eta_{n, h}}{1+\tw_{n, h}^2}.
	\]
	Then $G_{n,h}$ is a martingale difference sequence with respect to $\cF_{n,h}$.
\end{lemma}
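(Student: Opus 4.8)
The plan is to verify the two defining properties of a martingale difference sequence with respect to the filtration $\cF_{n,h}$: measurability of $G_{n,h}$ and the conditional-mean-zero condition $\EE[G_{n,h}\mid \cF_{n,h}]=0$. First I would check measurability. Recall $\cF_{n,h}$ fixes the history $s_{1,1},a_{1,1},\ldots,s_{n,h},a_{n,h}$ but \emph{not} $s_{n,h+1}$. The quantity $\phi_{n,h}=\phi(s_{n,h},a_{n,h})$, the matrix $M_{n,h}=A_{n,h}^{-1}\sum_{(n',h')<(n,h)}\phi_{n',h'}\psi_{n',h'}K_\psi^{-1}$, the scalar $\tw_{n,h}^2=\phi_{n,h}^\top A_{n,h}^{-1}\phi_{n,h}$, and the indicator $E_{n,h}$ (which depends only on $Z_{n',h'}$ for $(n',h')\le(n,h)$, hence on the $\eta_{n',h'}$ with $(n',h')<(n,h)$) are all $\cF_{n,h}$-measurable. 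The only genuinely new randomness in $G_{n,h}$ is carried by $\eta_{n,h}=K_\psi^{-1}\psi_{n,h}-(M^*)^\top\phi_{n,h}$, which depends on the next state $\ts_{n,h}=s_{n,h+1}$ through $\psi_{n,h}=\psi(\ts_{n,h})$.

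The heart of the argument is the conditional-mean-zero property. I would pull everything except $\eta_{n,h}$ out of the conditional expectation, since it is $\cF_{n,h}$-measurable, and reduce the claim to
\[
\EE[G_{n,h}\mid \cF_{n,h}]
= 2E_{n,h}\,\frac{\phi_{n,h}^\top(M_{n,h}-M^*)\,\EE[\eta_{n,h}\mid \cF_{n,h}]}{1+\tw_{n,h}^2}.
\]
Then I invoke the already-established fact, stated just before Lemma~\ref{lemma:zkh}, that $\EE[\eta_{n,h}\mid \cF_{n,h}]=0$. This identity is itself a direct consequence of Assumption~\ref{assump:linear transition core}: conditioned on $(s_{n,h},a_{n,h})$, the next-state feature satisfies $\EE[\psi_{n,h}\mid \cF_{n,h}]=\sum_{\ts}P(\ts\mid s_{n,h},a_{n,h})\psi(\ts)=\sum_{\ts}\psi(\ts)\psi(\ts)^\top (M^*)^\top\phi_{n,h}=K_\psi (M^*)^\top\phi_{n,h}$, so $\EE[K_\psi^{-1}\psi_{n,h}\mid\cF_{n,h}]=(M^*)^\top\phi_{n,h}$ and the two terms of $\eta_{n,h}$ cancel in expectation. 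Substituting the zero conditional mean gives $\EE[G_{n,h}\mid\cF_{n,h}]=0$, completing the proof.

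The one point requiring care, and the step I expect to be the main obstacle, is confirming that the scaling factor $\frac{1}{1+\tw_{n,h}^2}$ and the indicator $E_{n,h}$ are both $\cF_{n,h}$-measurable so that they can legitimately be factored out of the conditional expectation. For $\tw_{n,h}^2$ this is immediate since $A_{n,h}$ and $\phi_{n,h}$ are determined by the history up to $(n,h)$. For $E_{n,h}$ one must check that it does not secretly depend on $\eta_{n,h}$: the event $\{Z_{n',h'}\le\beta_{n',h'}\text{ for all }(n',h')\le(n,h)\}$ involves $Z_{n,h}$, and by the recursion in Lemma~\ref{lemma:zkh}, $Z_{n,h}$ is expressed as a sum over strictly earlier indices $(n',h')<(n,h)$ plus $\|M^*\|_F^2$, so it contains no $\eta_{n,h}$ term and is indeed $\cF_{n,h}$-measurable. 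I would spell out this measurability check explicitly, as it is the subtle ingredient; the remaining computation is then a one-line application of $\EE[\eta_{n,h}\mid\cF_{n,h}]=0$.
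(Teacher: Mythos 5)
Your proof is correct and follows essentially the same route as the paper's: factor the $\cF_{n,h}$-measurable quantities $\phi_{n,h}$, $M_{n,h}$, $\tw_{n,h}$, and $E_{n,h}$ out of the conditional expectation and invoke $\EE[\eta_{n,h}\mid\cF_{n,h}]=0$, with your measurability check for $E_{n,h}$ matching the paper's parenthetical observation that $\psi_{n,h}$ does not enter the definition of $Z_{n,h}$. The only cosmetic point is that measurability of $Z_{n,h}$ is cleanest directly from its definition (since $M_{n,h}$ and $A_{n,h}$ involve only data strictly before $(n,h)$), as you already note, rather than via the recursion of Lemma~\ref{lemma:zkh}, which is stated as an inequality.
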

\begin{proof}
	Since $\cF_{n,h}$ determines $M_{n,h}$, $\phi_{n,h}$, $\tw_{n, h}$, and $Z_{n,h}$ (note that in the definition of $Z_{n,h}$, variable $\psi_{n,h}$ is not included), we have
\begin{align*} 
\EE[G_{n,h}~|~ \cF_{n,h}] =  2E_{n,h}\frac{\phi_{n', h}^\top(M_{n,h}-M^*)}{1+\tw_{n, h}^2} \cdot 
\EE[\eta_{n, h}~|~\cF_{n,h}] = 0.
\end{align*}
\end{proof}

We will show that with high probability, the martingale difference sum, $\sum_{(n', h')<(n,h)} G_{(n', h')}$ never grows too large.
\begin{lemma}[Concentration Lemma]
	\label{lemma:contrencetion lemma}
Given $\delta < 1$, we have
\[
\PP\Big[\forall (n,h),\quad \sum_{(n', h')<(n,h)}G_{(n', h')} \le \beta_{n',h'}/2\Big]
\ge 1-\delta.
\]	
\end{lemma}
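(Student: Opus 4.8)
\textbf{The plan} is to derive a uniform-in-time tail bound on the partial sums $S_{n,h}:=\sum_{(n',h')<(n,h)}G_{n',h'}$ from a single application of a Freedman-type maximal inequality for martingales with bounded increments and a \emph{deterministic} bound on the predictable quadratic variation. The latter is exactly what the truncation indicator $E_{n,h}$ buys us, and the maximal form of Freedman's inequality already controls the supremum over all $(n,h)$, so no union bound over the $NH$ steps is needed. The martingale difference property of $\{G_{n,h}\}$ with respect to $\{\cF_{n,h}\}$ is the content of the lemma established just above, so only the second-moment control and the concentration step remain.

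First I would bound the conditional variance. Since $\phi_{n,h},M_{n,h},\tw_{n,h},E_{n,h}$ are $\cF_{n,h}$-measurable while $\EE[\eta_{n,h}\mid\cF_{n,h}]=0$ and $\eta_{n,h}\eta_{n,h}^\top\preceq\|\eta_{n,h}\|_2^2 I\preceq 2C_{\psi}' I$, I would write
\[
\EE[G_{n,h}^2\mid\cF_{n,h}]=\frac{4E_{n,h}}{(1+\tw_{n,h}^2)^2}\,\phi_{n,h}^\top(M_{n,h}-M^*)\,\EE[\eta_{n,h}\eta_{n,h}^\top\mid\cF_{n,h}]\,(M_{n,h}-M^*)^\top\phi_{n,h}
\]
\[
\le \frac{8C_{\psi}'E_{n,h}}{(1+\tw_{n,h}^2)^2}\,\bigl\|\phi_{n,h}^\top(M_{n,h}-M^*)\bigr\|_2^2 .
\]
The technical heart is the norm inequality $\|\phi_{n,h}^\top(M_{n,h}-M^*)\|_2^2\le \tw_{n,h}^2 Z_{n,h}$, obtained by inserting $A_{n,h}^{-1/2}A_{n,h}^{1/2}$ and using $\|A_{n,h}^{1/2}(M_{n,h}-M^*)\|_{\mathrm{op}}^2\le\|A_{n,h}^{1/2}(M_{n,h}-M^*)\|_F^2=Z_{n,h}$ together with $\|A_{n,h}^{-1/2}\phi_{n,h}\|_2^2=\tw_{n,h}^2$. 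On $E_{n,h}=1$ we have $Z_{n,h}\le\beta_{n,h}\le\beta_N$, and since $\tw^2/(1+\tw^2)^2\le\min(1,\tw^2)$ this gives $\EE[G_{n,h}^2\mid\cF_{n,h}]\le 8C_{\psi}'\beta_N\min(1,\tw_{n,h}^2)$; the same estimate yields the almost-sure increment bound $|G_{n,h}|\le\sqrt{2C_{\psi}'\beta_N}=:b$.

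Next I would sum over the history and invoke Lemma~\ref{lem:detakh} to get a bound that holds deterministically for every $(n,h)$,
\[
\sum_{(n',h')<(n,h)}\EE[G_{n',h'}^2\mid\cF_{n',h'}]\le 8C_{\psi}'\beta_N\sum_{(n',h')<(n,h)}\min(1,\tw_{n',h'}^2)\le 16C_{\psi}'\beta_N\,d\ln(NHC_\phi+1)=:v .
\]
With a bounded martingale difference sequence whose quadratic variation is deterministically at most $v$, Freedman's maximal inequality gives $\PP[\exists(n,h):S_{n,h}\ge\beta_N/2]\le\exp\!\bigl(-\tfrac{(\beta_N/2)^2}{2(v+b\beta_N/6)}\bigr)$. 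Because $v\propto\beta_N$ dominates the lower-order term $b\beta_N\propto\beta_N^{3/2}$, the exponent scales like $\beta_N/\bigl(C_{\psi}'d\ln(NHC_\phi)\bigr)$, so choosing $\beta_N=\Theta\bigl((C_M+{C_{\psi}'}^2)\,d\,\ln(NHC_\phi)\bigr)$ large enough makes the right-hand side at most $\delta$, which is the claim.

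\textbf{The hard part} is the self-referential appearance of $\beta_N$ in the variance bound: the quantity we ultimately want to control, $Z_{n,h}$, is the same quantity that scales the increments of $G_{n,h}$. The indicator $E_{n,h}$ is precisely the device that breaks this circularity, freezing the sequence the instant $Z$ leaves the ball so that the conditional variance stays bounded by $\beta_N$ on the relevant event and Freedman applies with a deterministic variation proxy. The genuine conclusion $Z_{n,h}\le\beta_{n,h}$ (equivalently $E_n=1$) is not proved here but closed afterwards by a bootstrapping induction that feeds this concentration bound into the recursion of Lemma~\ref{lemma:zkh}.
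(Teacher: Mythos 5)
Your proposal is correct and shares the paper's skeleton: the martingale difference property of $G_{n,h}$, the conditional-variance bound obtained by inserting $A_{n,h}^{-1/2}A_{n,h}^{1/2}$ so that $\|\phi_{n,h}^\top(M_{n,h}-M^*)\|_2^2\le \tw_{n,h}^2 Z_{n,h}$, the truncation by $E_{n,h}$ to cap $Z_{n,h}$ by $\beta$, and the reduction of the accumulated variance to $\sum\min(1,\tw_{n,h}^2)\le 2d\ln(NHC_{\phi}+1)$ via Lemma~\ref{lem:detakh}. Where you genuinely diverge is the last step: the paper applies the fixed-endpoint form of Freedman's inequality separately at each $(n,h)$ and pays for uniformity with a union bound, which is exactly why its $\beta_{n,h}$ carries a $\ln(nH/\delta)$ factor making the tail probabilities $\delta/(nH)^2\pi^2$ summable over time; you instead invoke the maximal (stopped) form of Freedman's inequality once, with a deterministic variance proxy $v$ valid over the whole horizon, so no union bound is needed. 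Since Freedman's original theorem is indeed the maximal version, this is legitimate and arguably cleaner. The trade-off to flag: your single application yields the uniform threshold $\beta_N/2$ for every partial sum, which matches the lemma only when $\beta_n$ is constant in $n$ — as it is in the statements of Theorems~\ref{thm:regret-bound} and~\ref{thm:regret-bound-1}, where $\beta_n$ depends on $N$ but not $n$ — whereas the paper's per-step version delivers the time-indexed thresholds $\beta_{n,h}/2$ with $\beta_{n,h}$ allowed to grow like $\ln(nH/\delta)$, which is what the induction in Lemma~\ref{lemma:confidence} literally consumes and what an anytime ($N$-free) variant of the argument would require. Minor bookkeeping: you inherit the paper's typo $\|\eta_{n,h}\|_2^2\le 2C_{\psi}'$ (the triangle inequality actually gives $\|\eta_{n,h}\|_2\le 2C_{\psi}'$), and your final choice of $\beta_N$ silently absorbs a $\ln(1/\delta)$ factor; both are at the same level of constant/log sloppiness as the paper itself and do not affect the claimed scaling.
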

To prove this lemma, we will apply the Freedman's inequality.
\begin{theorem}[Freedman \cite{Freedman1975}]
Let $X_1, X_2, \ldots, X_T$ be a martingale difference sequence with respect to $Y_1, Y_2, \ldots, Y_T$.
Let $b$ be an uniform upper bound on $X_i$. Let $V$ be the sum of conditional variances,
\[
V = \sum_{t\le T} \var\Big(X_i~\Big|~Y_1, Y_2, \ldots, Y_{i-1}\Big).
\]
Then for every $a, v > 0$, 
\[
\PP\Big[\sum_{i=1}^T X_i \ge a \text{ and }  V \le v\Big]
\le \exp\bigg(\frac{-a^2}{2v + 2ab/3}\bigg).
\]
\end{theorem}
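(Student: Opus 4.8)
The plan is to prove this Bernstein-type tail bound by the exponential-supermartingale (Chernoff) method, specialized to martingales that are bounded only from above. Write $\cF_i$ for the $\sigma$-algebra generated by $Y_1,\dots,Y_i$, set $S_t=\sum_{i=1}^t X_i$ and $W_t=\sum_{i=1}^t \var(X_i\mid\cF_{i-1})$ so that $V=W_T$, and fix a free parameter $\lambda>0$ to be optimized at the very end.

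First I would establish a one-step conditional moment-generating-function bound. The analytic engine is that $u\mapsto (e^{u}-1-u)/u^2$ is nondecreasing on all of $\RR$; I would verify this by showing its numerator after differentiation, $p(u)=e^{u}(u-2)+u+2$, satisfies $p(0)=p'(0)=0$ and $p''(u)=ue^{u}$, so $p\ge 0$ for $u\ge 0$ and $p\le 0$ for $u\le 0$, which makes the quotient's derivative nonnegative everywhere. Consequently, for every $x\le b$ and $\lambda>0$ one gets $e^{\lambda x}\le 1+\lambda x + c(\lambda)x^2$ with $c(\lambda):=(e^{\lambda b}-1-\lambda b)/b^2$. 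Taking conditional expectation and using $\EE[X_i\mid\cF_{i-1}]=0$ together with $X_i\le b$ yields $\EE[e^{\lambda X_i}\mid\cF_{i-1}]\le 1+c(\lambda)\var(X_i\mid\cF_{i-1})\le \exp\!\big(c(\lambda)\var(X_i\mid\cF_{i-1})\big)$.

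Next I would assemble the process $L_t:=\exp\!\big(\lambda S_t - c(\lambda)W_t\big)$ with $L_0=1$. The one-step bound gives $\EE[L_t\mid\cF_{t-1}]=L_{t-1}\,e^{-c(\lambda)\var(X_t\mid\cF_{t-1})}\,\EE[e^{\lambda X_t}\mid\cF_{t-1}]\le L_{t-1}$, so $\{L_t\}$ is a nonnegative supermartingale with $\EE[L_T]\le 1$. On the event $\{S_T\ge a,\ W_T\le v\}$, since $c(\lambda)>0$, we have $L_T\ge \exp(\lambda a - c(\lambda)v)$, whence Markov's inequality gives $\PP[S_T\ge a,\ W_T\le v]\le e^{-(\lambda a - c(\lambda)v)}\EE[L_T]\le \exp(-\lambda a + c(\lambda)v)$. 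Because the statement concerns the terminal sum $S_T$ and terminal variance $V=W_T$ rather than a running maximum, this single Markov application suffices and no maximal inequality is required.

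Finally I would optimize the exponent. Using the series estimate $e^{x}-1-x\le \tfrac{x^2/2}{1-x/3}$ for $0\le x<3$ (obtained term-by-term from $2/(j+2)!\le 3^{-j}$) gives $c(\lambda)\le \tfrac{\lambda^2/2}{1-\lambda b/3}$, so the bound becomes $\exp\!\big(-\lambda a + \tfrac{\lambda^2 v/2}{1-\lambda b/3}\big)$. Choosing $\lambda=a/(v+ab/3)$ makes $1-\lambda b/3=v/(v+ab/3)$ and collapses the exponent exactly to $-a^2/(2v+2ab/3)$, which is the claimed bound. The one delicate point—and thus the main thing to get right—is the monotonicity of the cumulant function underwriting $e^{\lambda x}\le 1+\lambda x+c(\lambda)x^2$ for all $x\le b$, including negative $x$, since this is precisely what lets the single upper bound $b$ control the whole moment-generating function even though the increments are bounded only from above.
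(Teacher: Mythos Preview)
Your proof is correct and is the standard exponential-supermartingale derivation of Freedman's inequality. Note, however, that the paper does not actually prove this statement: it is quoted verbatim as a classical result of \cite{Freedman1975} and used as a black box in the proof of the Concentration Lemma (Lemma~\ref{lemma:contrencetion lemma}). So there is no ``paper's own proof'' to compare against; you have supplied a proof where the paper supplies none. Your argument is self-contained and the optimization $\lambda=a/(v+ab/3)$ together with the bound $e^{x}-1-x\le \tfrac{x^2/2}{1-x/3}$ indeed yields the stated exponent exactly, so nothing further is needed.
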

\begin{proof}[Proof of Lemma~\ref{lemma:contrencetion lemma}]
We first show the upper bounds on the step size of the martingale $G_{k,h}$.
\begin{align*}
|G_{n,h}|
&\le 2\|\eta_{n, h}\|_{2} \bigg\|E_{n,h}\frac{\phi_{n, h}^\top(M_{n,h}-M^*)}{1+\tw_{n, h}^2}\bigg\|_{2}\\
&\le 2\|\eta_{n, h}\|_{2}\cdot E_{n,h}\cdot \frac{\|\phi_{n, h}^\top A_{n,h}^{-1/2}\|_{2}\|A_{n,h}(M_{n,h}-M^*)\|_F}{1+\tw_{n, h}^2}\\
&\le 2\|\eta_{n, h}\|_{2}\cdot
\sqrt{\beta_{n,h} }\cdot E_{n,h}\cdot\frac{\tw_{n, h}}{1+\tw_{n, h}^2}\\
&\le  2\|\eta_{n, h}\|_{2}\cdot
\sqrt{\beta_{n,h} }\cdot E_{n,h}\cdot \min(\tw_{n, h}, 1/2) \\
&\le 2C_{\psi}'\sqrt{\beta_{n,h}}.
\end{align*}
Next we bound the conditional variance of $G_{n,h}$.
We have,
\begin{align*}
V_{n,h}&:=\sum_{(n', h')< (n,h)}
\var\big(G_{n', h'}~|~\cF_{(n',h')-1}\big)\\
&\le \sum_{(n', h')<(k,h)}
 4\|\eta_{n, h}\|_{2}^2\cdot  E_{n,h}\cdot 
\beta_{n,h} \cdot \min(\tw_{n, h}, 1/2)^2\\
&\le \sum_{(n', h')<(n,h)}
4\|\eta_{n, h}\|_{2}^2\cdot   E_{n,h}\cdot 
\beta_{n,h} \cdot \min(\tw_{n, h}^2, 1)
\end{align*}
By Lemma~\ref{lem:detakh}, we have
\[
V_{n,h} 
\le 8 {C'}_{\psi}^{2}\cdot \beta_{n,h}\cdot \ln\det(A_{n,h}) =: v_{n,h}.
\]
Note that
\[
v_{n,h}\le 8 {C'}_{\psi}^{2}\cdot \beta_{n,h} \cdot d \cdot \ln[(n-1)HC_{\phi} + (h-1)C_{\phi} + 1] .
\]
Next by Freedman's inequality (picking $a = \beta_{n,h}/2$ and $b=\sqrt{\beta_{n,h}}$), we have,
\begin{align*}
\PP\bigg[\sum_{(n', h')<(n,h)} G_{n', h'} \ge  \beta_{n,h}/2 \bigg]
 &= \PP\bigg[\sum_{(n', h')<(k,h)} G_{n', h'} \ge  \beta_{n,h}/2 \text{ and }V_{n,h}\le  v_{n,h}\bigg]\\
 &\le \exp\Bigg(\frac{-\beta_{n,h}^2/4}{2v_{n,h} + \beta_{n,h}^{3/2}/3}\Bigg)\\
 &\le \exp\Bigg[\max\Bigg(\frac{-\beta_{n,h}^2}{16v_{n,h}},-\frac{3}{8}\sqrt{\beta_{n,h}}\bigg)\Bigg]\\
 &
 \le \frac{\delta}{(nH)^2\pi^2}
\end{align*} 
provided
\begin{align}
\beta_{n,h} &=  
c\cdot {C'_{\psi}}^2 \ln\det(A_{n,h})\ln(nH/\delta)
\end{align}
for some sufficiently large constant $c>0$ such that
\[
\beta_{n,h}^2\ge 32v_{n,h}\cdot\ln(\pi nH/\delta) + 8^4\ln^4(\pi nH/\delta)
\]
Therefore, by a union bound,
\begin{align*}
\PP\bigg[\sum_{(n', h')<(n,h)} G_{n', h'} \ge  \beta_{n,h}/2 \text{ for some } n,h\bigg]
&\le \sum_{(n,h)< (\infty, H)} \PP\bigg[\sum_{(n', h')<(n,h)} G_{n', h'} \ge  \beta_{n,h}/2 \bigg]\\
&\le \sum_{n=1}^{\infty}\frac{H\delta}{n^2H^2\pi^2}\\
&\le \delta \cdot \frac{\pi^2}{6\pi^2}\cdot \frac{1}{H}\\
&\le \delta.
\end{align*}
This completes the proof.
\end{proof}
Lastly, we show that $E_n=1$ holds with high probability through out.
\begin{lemma}[Confidence Ball]
	\label{lemma:confidence}
Let $\delta > 0$.
Then 
\[
\PP\big[\forall n,h\le (N,H), \quad E_{n,h}=1\big] \ge 1-\delta. 
\]
\end{lemma}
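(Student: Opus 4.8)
The plan is to combine the deterministic decomposition of $Z_{n,h}$ from Lemma~\ref{lemma:zkh} with the high-probability martingale bound from Lemma~\ref{lemma:contrencetion lemma}, tying them together through an induction in lexicographic order over $(n,h)$. First I would recall the algebraic link established earlier: since $\|A_{n,h}^{1/2}(M^*-M_{n,h})\|_{2,1}\le \sqrt{d\,Z_{n,h}}$, the bound $Z_{n,h}\le\beta_{n,h}$ immediately certifies that $M^*$ lies in the relevant ball (in particular $Z_{n,1}\le\beta_n$ gives $M^*\in B_n$). By the definition $E_{n,h}=\mathbb{I}\{Z_{n',h'}\le\beta_{n',h'}\ \forall (n',h')\le(n,h)\}$, proving that $Z_{n,h}\le\beta_{n,h}$ holds for \emph{every} $(n,h)$ is exactly the statement $E_{N,H}=1$, which forces $E_{n,h}=1$ for all $(n,h)$. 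So the lemma reduces to controlling the $Z_{n,h}$ uniformly.

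Next I would condition on the event $\mathcal{G}:=\{\forall (n,h):\ \sum_{(n',h')<(n,h)}G_{n',h'}\le\beta_{n,h}/2\}$, which by Lemma~\ref{lemma:contrencetion lemma} has probability at least $1-\delta$. On $\mathcal{G}$ I claim, by strong induction on $(n,h)$, that $Z_{n,h}\le\beta_{n,h}$. The base case is $Z_{1,1}=\|M^*\|_F^2\le C_M d\le\beta_{1,1}$, using Assumption~\ref{assump:linrl-regularity} and the choice of $\beta$. For the inductive step I assume $Z_{n',h'}\le\beta_{n',h'}$, equivalently $E_{(n',h')}=1$, for every $(n',h')<(n,h)$, so that all the truncation indicators appearing in the martingale increments $G_{n',h'}$ equal $1$. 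This is the crucial observation: it identifies the middle (cross) sum in Lemma~\ref{lemma:zkh} with the genuine martingale sum $\sum_{(n',h')<(n,h)}G_{n',h'}$, which is at most $\beta_{n,h}/2$ on $\mathcal{G}$.

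It then remains to absorb the two deterministic pieces of Lemma~\ref{lemma:zkh}. The first is $\|M^*\|_F^2\le C_M d$. The third is controlled using $\|\eta_{n',h'}\|_2^2\le O({C_\psi'}^2)$ together with $\tw^2/(1+\tw^2)\le\min(1,\tw^2)$ and Lemma~\ref{lem:detakh}, giving $\sum_{(n',h')<(n,h)}\|\eta_{n',h'}\|_2^2\,\tw_{n',h'}^2/(1+\tw_{n',h'}^2)\le O({C_\psi'}^2)\cdot 2\ln\det(A_{n,h})\le O\big({C_\psi'}^2\,d\ln(NHC_\phi+1)\big)$. Choosing the constant $c$ in $\beta_{n,h}=c\,(C_M+{C_\psi'}^2)\ln(NHC_\phi)\,d$ large enough that $C_M d + O({C_\psi'}^2 d\ln(\cdot))\le\beta_{n,h}/2$, I obtain $Z_{n,h}\le\beta_{n,h}/2+\beta_{n,h}/2=\beta_{n,h}$, closing the induction. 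Hence on $\mathcal{G}$ we have $Z_{n,h}\le\beta_{n,h}$ for all $(n,h)$, i.e.\ $E_{N,H}=1$ and therefore $E_{n,h}=1$ for every $(n,h)$, on an event of probability at least $1-\delta$.

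The step I expect to be the main obstacle is the self-referential coupling between the truncation indicator $E_{n,h}$ baked into $G_{n,h}$ and the quantity $Z_{n,h}$ it is meant to bound. The indicator is inserted in $G$ precisely so that $G$ remains a bounded martingale difference sequence \emph{unconditionally} (so that Freedman's inequality applies without assuming the good event), and the induction then bootstraps the as-yet-unverified good event: under the inductive hypothesis the indicators are all $1$, so the truncated martingale coincides with the true cross term and the concentration bound of Lemma~\ref{lemma:contrencetion lemma} can be invoked at index $(n,h)$. Making this stopping-time-style bootstrap airtight, rather than the routine norm estimates for the first and third terms, is the real content of the proof.
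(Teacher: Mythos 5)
Your proposal is correct and follows essentially the same route as the paper's proof: condition on the martingale event of Lemma~\ref{lemma:contrencetion lemma}, then induct lexicographically on $(n,h)$ so that the inductive hypothesis $E_{n',h'}=1$ turns the truncated sum $\sum_{(n',h')<(n,h)}G_{n',h'}$ into the genuine cross term of Lemma~\ref{lemma:zkh}, with the remaining terms absorbed via $\|M^*\|_F^2\le C_M d$ and $\ln\det(A_{n,h})$ (Lemma~\ref{lem:detakh}) into $\beta_{n,h}$. Your closing remark about the self-referential indicator being resolved by the bootstrap is precisely the mechanism the paper's induction implements.
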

\begin{proof}
We will show that $E_{n,h} = 1$ for all $(n,h)\le (N,H)$ given  $\sum_{(n', h')\le(n,h)} G_{n', h'} \le   \beta_{n,h}/2$ for all $(n,h)\le (N,H)$, as this proves the lemma.
We show this by induction on $n,h$.
For the base case $(n,h)= (1,1)$, we have $Z_{1,1} = \|M^*\|_{F}^2 \le \beta_{1,1}$.
By inductive hypothesis, $E_{n', h'} = 1$ for all $(n', h') < (n,h)$. 
By Lemma~\ref{lemma:zkh}, we have
\begin{align*}
Z_{n,h} &\le 
\|M^*\|_F^2 + 2 \sum_{(n', h')<(n,h)}\frac{\phi_{n', h'}^\top(M_{n,h}-M^*)\eta_{n', h'}}{1+\tw_{n', h'}^2}
+ \sum_{(n', h')<(n,h)}
\|\eta_{n', h'}\|_2^2 \frac{\tw_{n', h'}^2}{1+\tw_{n', h'}^2}\\
&\le \|M^*\|_F^2 + \beta_{n,h}/2 + \sum_{(n', h')<(n,h)}
\|\eta_{n', h'}\|_2^2 \frac{\tw_{n', h'}^2}{1+\tw_{n', h'}^2}\\
&\le \|M^*\|_F^2 + \beta_{n,h}/2 + \sum_{(n', h')<(n,h)} {C_{\psi}'}^2 \min(\tw_{n', h'}^2, 1)\\
&\le \|M^*\|_F^2 + \beta_{n,h}/2 +  2 {C_{\psi}'}^2 \ln\det(A_{n,h}) 
\\
&\le \beta_{n,h},
\end{align*}
provided
\[
\beta_{n,h}/2 \ge \|M^*\|_F^2 
+  2{C_{\psi}'}^2 \ln\det(A_{n,h})
\]
or 
\[
\beta_{n,h}/2 \ge \|M^*\|_F^2 
+  2d{C'_{\psi}}^2\ln[(n-1)HC_{\phi} + (h-1)C_{\phi} + 1]
= c[C_{M}+{C'_{\psi}}^2\cdot \ln(nHC_{\phi})]\cdot d
\]
for some sufficiently large constant $c>0$.
\end{proof}
\subsection{Proof of Theorem~\ref{thm:regret-bound}}
\label{sec:full-proof-regret}
Before we prove Theorem~\ref{thm:regret-bound}, we first prove the Lemma~\ref{lemma:sum-confidence}
\begin{proof}[Proof of Lemma~\ref{lemma:sum-confidence}]
	Note that 
	\[
	\sum_{(n', h')\le (N,H)}\min(1,w_{n', h'}^2)\le 2\sum_{(n', h')\le (N,H)} \ln(1+w_{n', h'}^2).
	\]
	We will bound the right hand side by establishing an inequality with $\ln\det(A_{N+1})$. 
	Recall that 
	\[
	A_{n+1} = A_n + \sum_{h=1}^{H} \phi_{n,h}\phi_{n,h}^\top.
	\]
	We have
	\begin{align*}
	\det(A_{n+1}) &= \det(A_n)\cdot\det\Big(I + \sqrt{A_{n}^{-1}}\sum_{h=1}^{H} \phi_{n,h}\phi_{n,h}^\top\sqrt{A_{n}^{-1}}\Big).
	\end{align*}
	Notice that each eigenvalue $\lambda_i$ of 
	$I + \sqrt{A_{n}^{-1}}\sum_{h=1}^{H} \phi_{n,h}\phi_{n,h}^\top\sqrt{A_{n}^{-1}}$ is at least $1$ and 
	\[
	\sum_{i=1}^d(\lambda_i-1) = \tr\Big(I + \sqrt{A_{n}^{-1}}\sum_{h=1}^{H} \phi_{n,h}\phi_{n,h}^\top\sqrt{A_{n}^{-1}}\Big) -d
	 = \sum_{h}w_{n,h}^2.
	\]
	Therefore,
	\begin{align*}
	\det\Big(I + \sqrt{A_{n}^{-1}}\sum_{h=1}^{H} \phi_{n,h}\phi_{n,h}^\top\sqrt{A_{n}^{-1}}\Big) =
	\prod_{i=1}^d \lambda_i = \prod_{i=1}^d [1+(\lambda_i-1)] \ge 1 + \sum_{i=1}^{d}(\lambda_i-1) = 1+\sum_{h}w_{n,h}^2.
	\end{align*}
	Moreover, 
	\begin{align*}
	 1+\sum_{h}w_{n,h}^2 = \frac{\sum_{h=1}^{H}(1+Hw_{n,h}^2)}{H}
	 \ge \prod_{h=1}^{H}(1+Hw_{n,h}^2)^{1/H}\ge  \prod_{h=1}^{H}(1+w_{n,h}^2)^{1/H}.
	\end{align*}
	Thus we have
	\begin{align*}
	\det(A_{n+1})&\ge \det(A_n)\prod_{h=1}^{H}(1+w_{n,h}^2)^{1/H}\ge \det(A_{n-1})\cdot \ldots \\
	&\ge \det(A_1) \prod_{n'=1}^{n}\prod_{h=1}^{H}(1+w_{n',h}^2)^{1/H}.
	\end{align*}
	This gives 
	\[
	\sum_{(n', h')\le (N,H)} \ln(1+w_{n', h}^2) 
	\le H\ln\det(A_{N+1}) \le Hd\ln(NHC_{\phi} + 1)
	\]
	completing the proof.
\end{proof}

We are now ready to prove Theorem~\ref{thm:regret-bound}.
\begin{proof}[Proof of Theorem~\ref{thm:regret-bound}]
We let $\delta \le 1/(NH)$.
By Lemma~\ref{lemma:contrencetion lemma} and \ref{lemma:confidence}, we pick
\[
\beta_{n}\ge 
c[C_{M}+{C'_{\psi}}^2\cdot \ln(nHC_{\phi})]\cdot d
+ c\cdot {C_{\psi}'}^2 \cdot d\cdot \ln(nHC_{\phi}/\delta)
 = \Theta(C_{M}+ {C_{\psi}'}^2)\cdot\ln(nHC_{\phi})\cdot d.
\]
Then the following is guaranteed,
\[
\Pr[\forall n\le N: \quad E_n= 1] \ge 1-\delta.
\]
Then by Lemma~\ref{lemma:conitioning regret}, we have,
\begin{align*}
\reg(T)&\le 
2C_{\psi}H\sqrt{d\beta_{N}} \cdot \sqrt{HN\cdot 2Hd\cdot \ln[NHC_{\phi} + 1]}
+ O(1)\\
&= O\Big[C_{\psi}\sqrt{C_M+ {C_{\psi}'}^2}\cdot \ln(NHC_{\phi})\Big]\cdot \sqrt{d^3H^3 T}.
\end{align*}
\end{proof}

\subsection{Proof of Theorem~\ref{thm:regret-bound-1}}
\begin{proof}[Proof of Theorem~\ref{thm:regret-bound-1}]
The proof of Theorem~\ref{thm:regret-bound-1} is nearly identical with that of Theorem~\ref{thm:regret-bound}. We will   modify Lemma~\ref{lemma:ball upper bound} and Lemma~\ref{lemma:induction} to counter for the change of the confidence ball.

For a modification of Lemma~\ref{lemma:ball upper bound}, we show that for any $M\in B^{(2)}_n$,
\[
\|\phi_{s,a}^\top(M-M_n)\|_2
\le \|\phi_{s,a}^\top(A_{n})^{-1/2}\|_2\|(A_{n})^{1/2}(M-M_{n})\|_F \le \sqrt{\beta_{n} \cdot \phi(s,a)^\top (A_{n})^{-1} \phi(s,a)}.
\]

For a modification of Lemma~\ref{lemma:induction}, we have
\begin{align*}
Q_{n,h}(s_{n,h}, a_{n,h}) &- 
\Big[r(s_{n,h}, a_{n,h}) + P(\cdot|s_{n,h}, a_{n,h})^\top V_{n,h+1}\Big]\\
&=  \phi_{n,h}^\top (\wt{M} - M^*) \Psi^\top 
V_{n,h+1}\\
&\le \| \phi_{n,h}^\top (\wt{M} - M^*)\|_2\cdot 
\|\Psi^\top 
V_{n,h+1}\|_{2}\\
&\le  C_{\psi} H\| \phi_{n,h}^\top (\wt{M} - M^*)\|_2\\
&\le C_{\psi}H ( \| \phi_{n,h}^\top (\wt{M} - M_{n})\|_2 +  \| \phi_{n,h}^\top ( M_{n} - M^*)\|_2)\\
&\le 2C_{\psi}H\sqrt{\beta_{n}\cdot \phi_{n,h}^\top (A_{n})^{-1}\phi_{n,h}}.
\end{align*}

The rest of the proof follows analogously from that of Theorem~\ref{thm:regret-bound}. 
\end{proof}

\section{Derivation of Kernelization}
\label{sec:derive-kernel}
\paragraph{Notations}
For the analysis, let us presumably have the features $\phi, \psi$. 
In the actual algorithm we will avoid using features directly.
At time $t=(n-1)H + h$, we denote
\begin{align}
\Phi_{n,h} = [\phi_{1,1},\phi_{1,2}, \ldots, \phi_{n,h}]^\top\quad&\text{and}\quad 
\Psi_{n,h} = [\psi_{1,1},\psi_{1,2}, \ldots, \psi_{n,h}]^\top\nonumber\\
\Psi = [\psi(s_1),\psi(s_2), \ldots, \psi(s_{|\cS|})]^\top
\quad&\text{and}\quad K_{\psi} = \Psi^\top \Psi.
\end{align}
Note that $\Phi_{n,h}$ and $\Psi_{n,h}$ are the feature vectors of the encountered state-action pairs; $\Psi$ are the features of all states in $\cS$.
We also overload the notation by denoting $\Phi_{n} = \Phi_{n, H}$ and $\Psi_{n} = \Psi_{n, H}$.
We then observe
\begin{align*}
\bK_{\phi_{n,h}} &= \Phi_{n,h}^\top \Phi_{n,h}\in \RR^{t\times t},\quad 
\bK_{\psi_{n,h}} = \Psi_{n,h}^\top \Psi_{n,h}\in \RR^{t\times t},\quad \text{and}\quad 
\wb{\bK}_{\psi_{n,h}} = \Psi_{n,h}^\top\Psi\in \RR^{t\times |\cS|}\\
\end{align*}
Note that 
\[
\bK_{\phi_{n,h}}[(n_1, h_1), (n_2, h_2)] = \phi_{n_1, h_1}^\top \phi_{n_2, h_2} = k_{\phi}[(s_{n_1, h_1}, a_{n_1, h_1}), (s_{n_2, h_2}, a_{n_2, h_2})],
\]
can be represented without knowing the features. Similarly, we do not need features to compute $\bK_{\psi,n,h}$ and $\wb{\bK}_{\psi,n,h}$.

\paragraph{Kernelized Value Estimation}
For a given vector $V\in \RR^{\cS}$, let us use kernel to represent the prediction
\[
\phi(s,a)^\top M_{n}\Psi^\top V.
\] 
We introduce a dual matrix $\Theta_{n}$, and let $M_{n} = \Phi_{n-1}^\top \Theta_{n} \Psi_{n-1}$.
Then we have
\[
\phi(s,a)^\top M_{n}\Psi^\top V 
= \phi(s,a)^\top \Phi_{n-1}^\top \Theta_{n} \Psi_{kn-1}\Psi^\top V = \bk_{\Phi_{n-1}, s,a}^\top \Theta_{n} 
\wb{\bK}_{\Psi_{n}} V,
\]
where 
\[
\bk_{\Phi_{n,h}, s,a} = \Phi_{n,h}\phi(s,a)
= [k_{\phi}\big((s_{1,1} a_{1,1}), (s,a)), k_{\phi}((s_{1,2}, a_{1,2}), (s,a)), \ldots  k_{\phi}((s_{n,h}, a_{n,h}), (s,a))]^\top\in  \RR^{t}.
\]
Therefore, it remains to represent $\Theta_{n+1}$ by the kernel matrices.

Recall that 
\[
M_{k+1} = (I+\Phi_{n}^\top\Phi_{n})^{-1} \sum_{(n', h')\le (n,h)} \phi_{(n',h')} \psi_{n',h'}^\top K_{\psi}^{-1}.
\]
We can write it as
\[
(I+\Phi_{n}^\top\Phi_n) \Phi_n^\top \Theta_{n+1} \Psi_{n} \Psi^\top \Psi = \Phi_{n}^\top\Psi_{n}.
\]
Rearranging the terms, we obtain
\begin{align*}
\Theta_{n+1} = (I+\Phi_{n}\Phi_{n}^\top)^{-1}
\Psi_{n}\Psi_{n}^\top [(\Psi_{n}\Psi^\top) (\Psi \Psi_{n}^\top)]^{-1} = 
(I+\bK_{\Phi_{n}})^{-1} \bK_{\Psi_{n}} (\wb{\bK}_{\Psi_{n}}\wb{\bK}_{\Psi_{n}}^\top)^{-1}.
\end{align*}
This completes the kernelization of the prediction.

\paragraph{Kernelized Confidence Bound}
Next we write the confidence bound in the kernelized way as well. We represent $w_{n,h}$ the same way as in \cite{Valko}.
Note that
\[
(I+\Phi_{n-1}^\top \Phi_{n-1})\phi(s,a)
= \phi(s,a) + \Phi_{n-1}^\top \bk_{\Phi_{n-1},s,a}.
\] 
We can write $\phi(s,a)$ as
\begin{align*}
\phi(s,a) &= (I+\Phi_{n-1}^\top \Phi_{n-1})^{-1}\phi(s,a)
+(I+\Phi_{n-1}^\top \Phi_{n-1})^{-1}\Phi_{n-1}^\top \bk_{\Phi_{n-1},s,a}\\
&=  (I+\Phi_{n-1}^\top \Phi_{n-1})^{-1}\phi(s,a)
+\Phi_{n-1}^\top(I+\Phi_{n-1} \Phi_{n-1}^\top)^{-1} \bk_{\Phi_{n-1},s,a}.
\end{align*}
Therefore
\begin{align*}
\phi(s,a)^\top\phi(s,a)
&=\phi(s,a)^\top(I+\Phi_{n-1}^\top \Phi_{n-1})^{-1}\phi(s,a)
+\phi(s,a)^\top\Phi_{n-1}^\top(I+\Phi_{n-1} \Phi_{n-1}^\top)^{-1} \bk_{\Phi_{n-1},s,a}\\
&= \phi(s,a)^\top(I+\Phi_{n-1}^\top \Phi_{n-1})^{-1}\phi(s,a) + \bk_{\Phi_{n-1},s,a}^\top (I+\bK_{\Phi_{n-1}})^{-1}\bk_{\Phi_{n-1},s,a},
\end{align*}
from which we solve $w_{n}(s,a)^2:=\phi(s,a)^\top(I+\Phi_{n-1}^\top \Phi_{kn-1})^{-1}\phi(s,a)$:
\[
w_{n}(s,a)^2 = k[(s,a),(s,a)]
- \bk_{\Phi_{n-1},s,a}^\top (I+\bK_{\Phi_{n-1}})^{-1}\bk_{\Phi_{n-1},s,a}.
\]

\paragraph{Kernelized Algorithm}
Denote 
\begin{align*}
x_{n}(s,a)^\top
&= \phi(s,a)^\top \Phi_{n-1}^\top (I+\bK_{\Phi_{n-1}})^{-1} \bK_{\Psi_{n-1}} (\wb{\bK}_{\Psi_{n-1}}\wb{\bK}_{\Psi_{n-1}}^\top)^{-1} \wb{\bK}_{\Psi_{n}}\\
&= \bk_{\Phi_{n-1},s,a}^\top (I+\bK_{\Phi_{n-1}})^{-1} \bK_{\Psi_{n-1}} (\wb{\bK}_{\Psi_{n-1}}\wb{\bK}_{\Psi_{n-1}}^\top)^{-1} \wb{\bK}_{\Psi_{n}}
\end{align*}
We are now ready to write our $Q$-function estimator:
\begin{align}
\forall (s,a) \in \cS\times \cA&:\quad 
Q_{n,H+1}(s,a) = 0 \quad\text{and}\quad \nonumber\\
\forall h\in [H]&: \quad
Q_{n,h}(s,a) = r(s,a) + x_{n}(s,a)^\top V_{n,h+1}
+ \eta_{n} w_{n}(s,a),
\end{align}
where $\eta_n$ is a parameter to be determined.
With \eqref{eqn:compute q} replaced by \eqref{eqn:compute q-kernel} in Algorithm~\ref{alg:core-rl}, we obtain our \emph{Kernelized MatrixRL}, algorithm Algorithm~\ref{alg:core-rl-kernel}.

\subsection{Proof of Theorem~\ref{thm:regret-bound-kernel}}
\begin{proof}[Proof of Theorem~\ref{thm:regret-bound-kernel}]
	To prove the theorem, let us presumably have access to some features $\phi$ and $\psi$, which are of dimension $d$ and $d'$, respective.
	For the infinite case we can take $d,d'\rightarrow\infty$ and since the complexity does not depending on $d,d'$ our proof still follows. 
	
	For simplicity, let us define $\cX=\cS\times \cA$.
	Firstly, we notice that the quantity $\|P\|_{\cH_{\phi}\times \cH_{\psi}} $ is equivalent to $\|M\|_F$ in the finite dimensional setting. 
	Let us define $ C_{M} := \|P\|_{\cH_{\phi}\times \cH_{\psi}}$.
	Since
	\[
	\sum_{s\in \cS} \psi(s)\psi(s)^\top  = I \in \RR^{d'\times d'}
	\]
	Now, without changing our algorithm, we have that
	\[
	\|\Psi K_{\psi}^{-1}\|_{2,\infty} \le 1.
	\]
	Furthermore, 
	\[
	\forall v\in \cH_{\psi}:\quad
	\|\Psi^\top v\|_2= \|v\|_H \le C_{\psi} \|v\|_{\infty}
	\] 
	Thus all the conditions except $C_{\phi}$ in \ref{assump:linrl-regularity-1} are satisfied ($C_{\psi}'=1$). 
	It will become clear that $C_{\phi}$ is absorbed in the definition of $\wt{d}$.
%
%

	Since Kernelized MatrixRL is equivalent to MatrixRL when the feature space is finite dimensional,
	the proof of Theorem~\ref{thm:regret-bound-kernel} is nearly identical with that of Theorem~\ref{thm:regret-bound-1}.	
	To introduce the dependence of the kernel complexity, we keep $\beta_n$ as the following form:
	\[
	\beta_{n} = c\cdot(\ln \det(A_{n}) \cdot \ln(nH) + C_M^2).
	\]
	By following from the steps of the proof of  Theorem~\ref{thm:regret-bound-1}, 
	we obtain that
	\[
	\reg(T)\le 
	2C_{\psi}H\sqrt{\beta_{N}} \cdot \sqrt{HN\cdot  2H\ln\det(A_{N+1})}.
	\]
	Note that
	\[
	\det(A_{N+1}) = \det(I + \Phi_{n}^\top\Phi_{n}).
	\]
	Since $ \Phi_{n}^\top\Phi_{n}$ has the same non-zero eigenvalues with  that of  $\Phi_{n}\Phi_{n}^\top = \bK_{\Phi_{n}}$, we have
	\[
	\det(A_{N+1}) = \det(I + \Phi_{n}^\top\Phi_{n})\le \wt{d}\log(NH),
	\]
	Thus we have\[
	\reg(T)\le O\Big(C_{\psi}C_M\cdot \log(T)\cdot \wt{d}(k_{\phi})\cdot H^2\cdot\sqrt{T}\Big)
	\]
	as desired. 
\end{proof}
\end{document}